\newsavebox{\measurebox}
\newcommand{\deepmethod}{\textsc{ExpertNet}}
\newcommand*{\inlineequation}[2][]{%
  \begingroup
    \refstepcounter{equation}%
    \ifx\\#1\\%
    \else
      \label{#1}%
    \fi
    \relpenalty=10000 %
    \binoppenalty=10000 %
    \ensuremath{%
      #2%
    }%
    ~\@eqnnum
  \endgroup
}
\title{\deepmethod: A Symbiosis of Classification and Clustering}
\author{
Shivin Srivastava$^1$
\and
Kenji Kawaguchi$^1$ \And
Vaibhav Rajan$^1$
\affiliations
$^1$ National University of Singapore\\
\emails
\{shivin, kenji, vaibhav\}@comp.nus.edu.sg
}
\begin{document}

\maketitle

\begin{abstract}

A widely used paradigm to improve the generalization performance of high-capacity neural models is through the addition of auxiliary unsupervised tasks during supervised training.
Tasks such as similarity matching and input reconstruction have been shown to provide a beneficial regularizing effect by guiding representation learning.
Real data often has complex underlying structures and may be composed of heterogeneous subpopulations that are not learned well with current approaches.
In this work, we design \textbf{\deepmethod}, which uses novel training strategies to learn clustered latent representations and leverage them by effectively combining cluster-specific classifiers.
We theoretically analyze the effect of clustering on its generalization gap, and empirically show that clustered latent representations from \deepmethod\ lead to disentangling the intrinsic structure and improvement in classification performance.
\deepmethod\ also meets an important real-world need where classifiers need to be tailored for distinct subpopulations, such as in clinical risk models.
We demonstrate the superiority of \deepmethod\ over state-of-the-art methods on $6$ large clinical datasets, where our approach leads to valuable insights on group-specific risks.

\end{abstract}

\section{Introduction}
\label{sec:introduction}

Overfitting is a common problem in high-capacity 
models, such as 
neural networks,
that adversely affects generalization performance.
A standard approach to reduce overfitting is regularization.
Various regularization strategies have been developed
such as those based on norm penalties and some specifically designed for neural networks such as dropout and early stopping \cite{goodfellow2016deeplearning}.
The use of auxiliary tasks have also been explored for regularization.
The rationale comes from multi-task learning, which is known to improve generalization through improved statistical strength via shared parameters \cite{baxter1995learning,caruana1997multitask}.

Since there is no dependence on label acquisition, several unsupervised auxiliary tasks have been explored for regularizing neural networks.
Input reconstruction is a common choice, often implemented using autoencoder-like architectures, e.g. \cite{rasmus2015ladder,zhao2015swwae,zhang2016augmenting,leli2018supervised}. 
Such reconstruction-based regularization has the advantage of yielding hidden layer representations closer to the original data topology.



Real data, however, contains complex underlying structures like clusters and intrinsic manifolds that are not disentangled by the bottleneck layer of autoencoders. 
This has been studied in the context of purely unsupervised models that have explored the 
joint tasks of clustering and dimensionality reduction (DR) to find ``cluster-friendly'' representations.
E.g., in 
\cite{yang2017towards}, it is shown that joint clustering and DR outperforms the disassociated approach of independently performing DR followed by clustering.  
Recognizing the importance of preserving cluster structure in latent embedded spaces, we hypothesize that, in addition to reconstruction,
clustering-based regularization could improve generalization performance in supervised models.

Our work is also motivated by the need to develop models tailored to distinct subpopulations in the data.
This is a common requirement in clinical risk models since 
patient populations can show significant 
heterogeneity. 
Most previous works adopt a `cluster-then-predict' approach to model such data
e.g., \cite{ibrahim2020classifying}, where clustering is first done independently to find
subpopulations (called subtypes) 
and then predictive models for each of these clusters are learnt.
Such subtype-specific models are often found to outperform models learnt from the entire population
\cite{masoudnia2014mixture}.
They also provide a form of interpretability and subsequent clinical decisions can be personalized to each patient group's characteristics.
However, in these approaches, clustering is performed independent of classifier training, and may not discover latent structures that are beneficial for subsequent classification.
This leads us to hypothesize that combined clustering 
with supervised classification will 
lead to better performance compared to such cluster-then-predict approaches.

We thus design \deepmethod, a deep learning model that performs simultaneous clustering and classification. \deepmethod\ consists of components that have both local and global view of the data space. The local units specialize in classifying observations in clusters found in
the data while the global unit is responsible for generating cluster friendly embeddings based on the feedback given by local units. Apart from being a supervised model, \deepmethod\ can function as an unsupervised model that can perform target-specific clustering.
In summary, our contributions are:
\begin{enumerate}[noitemsep,topsep=0pt,labelindent=0em,leftmargin=*]
\item \textbf{Model}: We design \deepmethod, a neural model for simultaneous clustering and classification with cluster-specific local networks. We introduce novel training strategies to obtain latent clusters and use them effectively to improve generalization of the classifiers.
\item \textbf{Theoretical Guarantees}: We analyze the effect of clustering on \deepmethod's generalization gap.
\item \textbf{Experiments}: Our extensive experiments  demonstrate the efficacy of our model over both reconstruction-based regularization and cluster-then-predict approaches. 
On $6$ large real-world clinical datasets
\deepmethod\  
achieves clustering performance that is comparable to, and achieves classification performance that is considerably better than, state-of-the-art methods.
\end{enumerate}

\section{Background and Related Work}
\label{sec:related_work}

Unsupervised learning techniques are often used to pretrain a neural network before finetuning with supervised tasks \cite{bengio2007greedy}.
Input reconstruction 
is commonly 
used to regularize deep models. 
\cite{zhang2016augmenting} jointly train a supervised neural model with an unsupervised reconstruction network.
\cite{zhao2015swwae} showed that by adding decoding pathways, to add an auxiliary reconstruction task to the objective, improves the performance of supervised networks.
Similarly \cite{leli2018supervised} append a prediction layer to the embedding layer to improve generalization performance.

A more dissociated way of using clustering to benefit supervised models is to simply find clusters in the data and then train separate models on each of them that. Amongst such techniques, there are some algorithms \cite{finley2008supervised,gu2013clustered,fu2010mixing} that impose a common, global regularization scheme on all the individual classifiers. In some other works \cite{reyna2019sepsis,lasko2013computational,suresh2018learning}, all the individual classifiers are completely independent.
Such analysis is common in clinical data where patient stratification is done followed by predictive modeling.
E.g., Deep Mixture Neural Networks (DMNN) \cite{li2020dmnn}, designed with this aim, is based on an autoencoder architecture while having an additional gating network, which is first trained to learn clusters through $k$-means.
Local networks corresponding to each cluster are trained using a weighted combination of losses, where the weights are determined by the gates.

Simultaneous clustering and representation learning have been used in purely unsupervised settings.
Examples include neural clustering models such as
Deep Embedded Clustering (DEC)
\cite{xie2016unsupervised} and Improved DEC (IDEC) \cite{Guo2017IDEC} and Deep Clustering Network (DCN) \cite{yang2017towards}.
Recent self-supervised learning techniques also employ cluster centroids as pseudo class labels in embedded data space to improve the quality of representations \cite{li2020prototypical,caron2018deep}. 

We briefly describe DEC
\cite{xie2016unsupervised} as the loss function in \deepmethod\ is based on their formulation. 
Their key idea is to find latent representations $z_i$ for data points using an autoencoder by first pretraining it.
Initial cluster centers $\{\mu_j\}$ are obtained by 
using $k$-means on the latent representations.
The decoder is then removed and the representations are fine tuned by minimizing the loss function:
\begin{align}
    \label{eqn:kl}
    L_c = KL(P || Q) = \sum_{i=1}^{N} \sum_{j=1}^{N} p_{ij} \log\frac{p_{ij}}{q_{ij}}
\end{align}
where $q_{ij}$ is the probability of assigning the $i^{\rm th}$ data point ($z_i$) to the $j^{\rm th}$ cluster (with centroid $\mu_j$) measured by the Student's $t$-distribution \cite{maaten2008visualizing} as:
\begin{align}
    \label{eqn:q}
    q_{ij} = \frac{(1+\|z_i-\mu_j\|^2)^{-1}}{\sum_j(1+\|z_i-\mu_j\|^2)^{-1}}
\end{align}
The target distribution $p_{ij}$ is defined as
\begin{align}
    \label{eqn:p}
    p_{ij} = \frac{\sfrac{q_{ij}^2}{\sum_i q_{ij}}}{\sum_j (\sfrac{q_{ij}^2}{\sum_i q_{ij}})}
\end{align}
The predicted label of $x_i$ is $\argmax_j q_{ij}$. 
Minimizing the loss function $L_c$ is a form of self-training as points with high confidence act as anchors and distribute other points around them more densely.

DEC, IDEC and DCN are all centroid-based and have objectives similar to that of $k$-means.
In such approaches, the encoder can map the centroids to a single point to make the loss zero and thus collapse the clusters.
Hence, previous methods employ various heuristics to balance cluster sizes, e.g., through sampling approaches \cite{caron2018deep} or use of priors
\cite{Jitta2018OnCT}.

\section{\deepmethod}
\label{sec:ExpertNet_method}
Given $N$ datapoints $\{x_i \in X\}^N_{i=1}$, and 
class labels $y_i \in \{1, \dots, \mathcal{B}\}$, associated with every point $x_i$,
our aim is to simultaneously
(i) cluster the $N$ datapoints  into $k$ clusters, each represented by a centroid $\mu_j , j \in \{1,\dots , k\}$, and
(ii) build $k$ distinct supervised classification models within each cluster to predict the class labels.
Note that during training, labels are used only for building the classification models and not for clustering.

\subsubsection{Network Architecture}

Fig. \ref{fig:ExpertNet} shows the neural architecture of \deepmethod\ that consists of an encoder, a decoder and $k$ local networks ($LN_j$).
The encoder $f(\Ucal): X \xrightarrow{} Z$ is used to obtain low-dimensional representations of the input datapoints.
Cluster structure is learnt in this latent space and representations in each cluster are used in local networks, $h(\Wcal_j): Z \xrightarrow{} \hat{y}_j$ for $j=1 \ldots k$ to train $k$ classification models.
In addition, there is a decoder 
$g(\Vcal): Z \xrightarrow{} X$ that is used to reconstruct the input from the embeddings.
\deepmethod\ is parameterized by three sets of weights: $\Ucal, \Vcal, \{\Wcal_j\}_{j=1}^{k}$ which are learnt by optimizing a combination of losses as described below.

\begin{figure}[h]
    \includegraphics[width=\columnwidth]{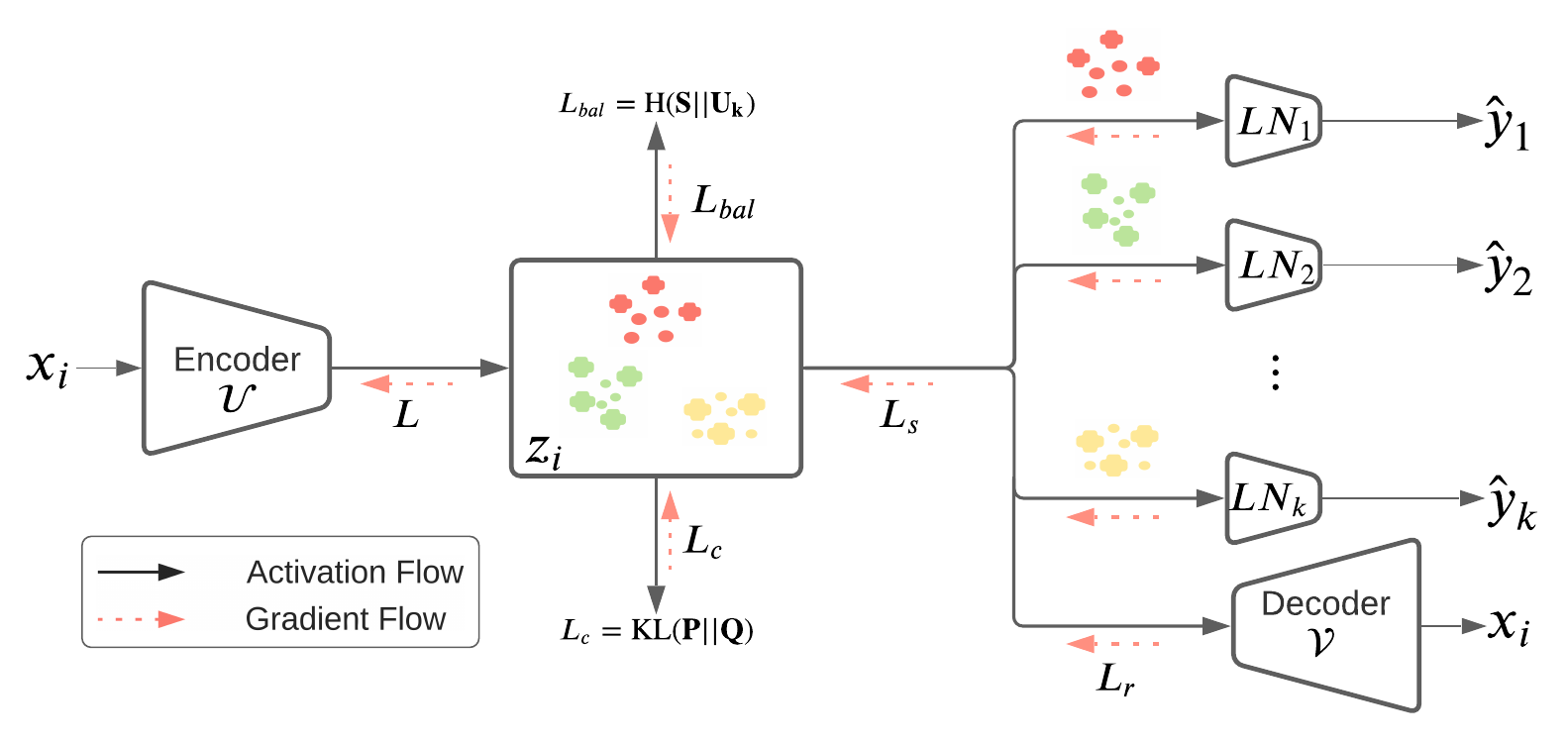}
    \caption{\deepmethod\ Architecture}
    \label{fig:ExpertNet}
\end{figure}

\subsubsection{Loss Function}

The overall loss function $L$ is 
a weighted combination, with coefficients $\beta, \gamma>0$, of the reconstruction loss $L_r$, clustering loss 
$L_c$, cluster balance loss $L_{bal}$ and classification loss $L_s$:
\begin{align}
L = L_r + \beta \cdot L_c + \gamma \cdot L_s + \delta \cdot L_{bal} \label{eqn:ExpertNet_loss}
\end{align}

$L_c$ is defined as the KL divergence loss (Eq. \ref{eqn:kl}), where the cluster membership distribution $Q$ (Eq. \ref{eqn:q}) uses representations $z_i$ and cluster centroids $\mu_j$ inferred during \deepmethod\ training.
As suggested in \cite{Guo2017IDEC,yang2017towards},
to prevent distortion in the latent space and improve clustering performance, 
we add the reconstruction loss measured by mean squared error:
\begin{equation}
\label{eqn:lr}
L_r = \sum_{i=1}^{N} \|x_i - g(f(x_i))\|^{2}
\end{equation}

We design a novel cluster balance loss to discourage unevenly distributed cluster sizes. We define the `soft' size of a cluster $C_j$ as $|C_j| = \sum_{i} q_{ij}$ and cluster support counts $\mathbf{S} = [|C_1|, |C_2|, \dots, |C_k|]$ as a $k$-dimensional probability distribution.
Let $\mathbf{U_k} = (\sfrac{1}{k})\mathbf{U}(0,1)$ denote the $k$-dimensional uniform distribution function. 
We use the Hellinger distance ($H$), which measures the dissimilarity between two distributions, as the loss
${L_{bal} = H(\mathbf{S} || \mathbf{U_k}) = \frac{1}{2}\|\sqrt{S} - \sqrt{U_k}\|_2}.$
$L_s$ is a weighted cross entropy loss described in the following.

\subsubsection{\deepmethod\ Training}

To initialize the parameters $(\Ucal, \Vcal)$, we first pre-train the encoder and decoder with the input data using only the reconstruction loss $L_r$. This is followed by $k$-means clustering on $\{z_i = f(x_i; \Ucal)\}^{N}_{i=1}$ to obtain cluster centroids $\{\mu_i\}_{i=1}^{k}$ which are used to calculate  cluster membership and target distributions, $Q,P$.
After initialization, 
we use mini-batch Stochastic Gradient Descent to train the entire network, using the loss $L$ (details are in Appendix \ref{app:optimization}). 
To stabilize training we update $P$ only after every epoch.

\begin{algorithm}[!hbt]
    \caption{ExpertNet\ Training\label{algo:cac}}
    \KwIn{Training Data: $X \in \mathbb{R}^{n\times d}$, labels $y^{n \times 1} \in [\mathcal{B}]^{n}$, $k$, $f(\cdot;\Ucal), g(\cdot;\Vcal)$ and $\{h_j(\cdot;\Wcal_j)\}_{j=1}^{k}$, $\Jcal$}
    {\bf $\triangleright$ Initialization} \\
    Pre-train $f(\cdot; \Ucal)$ \& $g(\cdot; \Vcal)$ via back-propagating loss in eq. \ref{eqn:lr}\\
    Compute $\mu(C_j) \forall\ C_j\ s.t.\ j \in [k]$ and $b \in [\mathcal{B}]$\\
    Compute matrices $Q$ and $P$ according to eqs. \ref{eqn:q} and \ref{eqn:p}

    {\bf $\triangleright$ Algorithm} \\
    \While{Validation AUC increases}{
        \For{every mini-batch $\Xcal_b$}{
            $\Zcal_b \xleftarrow{} f(\Xcal_b;\Ucal)$\\
            Calculate $Q_b$ by eq. (\ref{eqn:q})\\
            \For{$T$ sub-iterations}{
                Sample $\{s_i\sim q_{i\cdot}\}_{i=1}^{|\Xcal_b|}$\\
                Calculate $\{\Xcal_m = \{x_i:s_i=j\}_{i=1}^{|\Xcal_b|}\}_{j=1}^{k}$\\
                Train local classifiers $\{h_j\}_{j=1}^{k}$ on $(\Xcal_j, \Ycal_j)$\\
                Update $\{\Wcal_j\}_{j=1}^{k}$
            }
            Backpropagate $L = L_r + \beta\cdot L_c + \gamma \cdot L_s + \delta \cdot L_{bal}$ and update $\mu, \Ucal, \Vcal$ and $\Wcal$ 
        }
        Update $P$ via eq. (\ref{eqn:p})
    }
    {\bf $\triangleright$ Fine tune Local Classifiers} \\
    For every cluster $C_j$, train a classifier $f_j$ on $(\Xcal_{j}, \Ycal_{j})$.

    {\bf Output} Trained \deepmethod\ model. Cluster centroids $\mu = \{\mu_j\}_{j=1}^{k}$
\end{algorithm}

Since the embeddings get updated progressively in every iteration, we train the LNs for a larger number of {\it sub-iterations} within every iteration of the main training loop. 
The number of sub-iterations gradually increases ($1$ per every 5 epochs until a max-limit we set to $10$).
This enables the LNs to learn better from the stabilized clustered embeddings than from the 
intermediate representations.

Further, we design a novel strategy called {\it stochastic cohort sampling} used in each sub-iteration,
which leads to more robust classifiers, as verified in our experiments.
Instead of training each LN with a fixed cluster of data points (e.g., using 
$\argmax_{j=1}^{k} q_{i,j}$),
we leverage the  probabilistic definition of clusters to obtain multiple, different cluster assignments for the same set of embeddings. Considering $q_{i\cdot}$, the $i^{th}$ row of $Q$, as a cluster probability distribution for $z_i$, we sample a random variable $s_i$ from $q_{i\cdot}$, denoting the cluster assignment for the point $x_i$. We can define a cluster realization $C_m = \{i:s_i=m\}_{i=1}^{N}$ (i.e. index of all the points assigned to cluster $C_m$) and $\Xcal_m = \bigcup\limits_{j\in C_m} \{x_j\}$.
The LNs are trained on these cluster realizations for $T-1$ sub-iterations \textit{without} backpropagating the error to the encoder. The individual errors are collected from all the $k$ LNs only at the last ($T^{\rm th}$) iteration and backpropagated to the encoder to adjust the cluster representations accordingly. The final 
classification loss 
is a weighted cross entropy (CE) loss:
\begin{equation}
L_s = \sum_{j=1}^{k} \sum_{p\in C_j} q_{p,j} \operatorname{CE}(y_p, h_j(x_p; \Vcal_j)).
\end{equation}

After training, the encoder network is frozen and the local networks are finetuned on the latent embeddings via stochastic cohort sampling
to further improve LN performance.

\subsubsection{\deepmethod\ Prediction}

Prediction can be done using the encoder and local networks. 
For a test point $\hat{x}_p$, the soft cluster probabilities ($\hat{q}_{pj}$) are calculated from Eq. \ref{eqn:q}. All the local networks (see stochastic cohort sampling above) can be used to predict the class label 
$
\hat{y}_p = \sum_{j=1}^{k} \hat{q}_{pj} h_j(\hat{x}_p)
$.

\section{Theoretical Analysis}
Let  $\{\Omega_j\}_{j=1}^k$ be a partition of $\Xcal$ such that $\Xcal = \cup_{j=1}^k\Omega_j$ where $\Omega_j \cap \Omega_{j'} = \emptyset$ for $j\neq j'$. This corresponds to the inverse image of clustering at the space of $z$. We define $N_j$ as 
$$
N_j = \sum_{i=1}^N\one\{x _{i}\in \Omega_j\}; f(x) = \sum_{j=1}^k \one\{x \in \Omega_j\} f_j(x)
$$
where $f_j =g_{j} \circ f$ with $g_j$ being the encoder for the cluster $\Omega_j$ and $f$ being the shared encoder.   


While we provide the results for multi-class classification in Appendix \ref{app:1}, this section considers  binary classification. For binary  classification problems with  $y \in \{-1,+1\}$ and $f(x)\in \RR$, define the margin loss as follows: 
 $$
 {\ell}_{\rho}(f(x),y)={\ell}_{\rho}^{(1)}(f(x)y)
 $$  
 where 
$$
{\ell}_{\rho}^{(1)}(q)=
\begin{cases} 0 & \text{if } \rho \le q \\
1-q/\rho &  \text{if } 0 \le q \le \rho \\
1 & \text{if } q \le 0.\\
\end{cases}
$$

Define the 0-1 loss  as:
$$
\textstyle \ell_{01}(f(x),y) = \one\{f(x)y \le 0\}. 
$$

To simplify the equation and discussion, we consider the case where $\Pr(x \in \Omega_j)=1/k$ (uniform) and $N_{j}=N/k$ (uniform), whereas the results for a more general case are presented in the appendix. Defining $\hat \EE_{x,y}[\ell_{\rho}(f(x),y)] $ to denote the empirical los, we have: 
\begin{theorem}
\label{thm:5}
Suppose that for all $j \in \{0,\dots,k\}$, the function  $\sigma_{l}^j$ is 1-Lipschitz and positive homogeneous for all $l \in [\max(L-1, Q-1)]$ and $\|x^{j}\|\le B_j$ for all $x^{j} \in \Omega_j$. Let $\Fcal_j =\{x \in \Omega_j \mapsto (g_{j} \circ e)(x): (\forall l \in[L-1])[\|W_{l}^{j}\|_F \le M_l^{j} \wedge\|W_{l}^{0}\|_F \le M_l ^{0}] \}$ and $\Fcal^{k}= \{x\mapsto f(x):f(x) = \sum_{j=1}^k \one\{x \in \Omega_j\} f_j(x), f_j \in \Fcal_j\}$. Then, for any $\delta>0$, with probability at least $1-\delta$ over  an i.i.d. draw of $m$ i.i.d. test samples  $((x_i, y_i))_{i=1}^m$, the following holds: for all maps $ f^{k}\in\Fcal^{k}$,
\begin{align*}
\EE_{x,y}[\ell_{01}(f^{k}(x),y)] &- \hat \EE_{x,y}[\ell_{\rho}(f^{k}(x),y)]
\le \\
& \frac{\zeta_1\left(\sum _{j=1}^K B_j \prod_{l=1}^L M_l^{j} \right)}{\sqrt{kN}}\\
& + 3\sqrt{\frac{k\ln(2k/\delta)}{2N}}     
\end{align*}
where $\zeta_1=2 \rho^{-1}(\sqrt{2 \log(2) (L+Q) }+1)(\prod_{l=1}^Q M_l^{0})$ is a $k$ independent term. 
\end{theorem}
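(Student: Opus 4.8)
The plan is to reduce the statement to a per-cluster application of a standard margin-based Rademacher generalization bound, exploiting the disjointness of the partition $\{\Omega_j\}_{j=1}^k$ so that the whole analysis factorizes cluster by cluster. The starting point is the classical margin bound (e.g. Mohri et al.): for a real-valued hypothesis class $\Hcal$ and $n$ i.i.d. samples, with probability at least $1-\delta$ the population $0$-$1$ risk is at most the empirical $\rho$-margin risk plus $\frac{2}{\rho}\mathcal{R}_n(\Hcal)+3\sqrt{\ln(2/\delta)/(2n)}$, where $\mathcal{R}_n$ is the Rademacher complexity; here the factor $2$ comes from symmetrization and the factor $\rho^{-1}$ from the Talagrand contraction applied to the $\rho^{-1}$-Lipschitz margin loss, after using $\ell_{01}\le\ell_\rho$ pointwise. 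The key structural observation is that, since the $\Omega_j$ are disjoint and cover $\Xcal$, both $\EE_{x,y}[\ell_{01}(f^k(x),y)]$ and $\hat\EE_{x,y}[\ell_\rho(f^k(x),y)]$ decompose exactly as $\Pr(\Omega_j)$-weighted (resp. $N_j/N$-weighted) sums of the conditional losses of the individual cluster maps $f_j=g_j\circ e$, because on $\Omega_j$ only the $j$-th indicator is nonzero and $f^k$ restricted to $\Omega_j$ equals $f_j\in\Fcal_j$.

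First I would apply the margin bound to each cluster conditionally on $\{x\in\Omega_j\}$, using the $N_j$ samples falling in $\Omega_j$ at confidence level $\delta/k$; a union bound over the $k$ clusters then produces the $\ln(2k/\delta)$ factor. Under the uniform assumptions $\Pr(\Omega_j)=1/k$ and $N_j=N/k$, the conditional confidence contribution of cluster $j$ is $3\sqrt{\ln(2k/\delta)/(2N_j)}=3\sqrt{k\ln(2k/\delta)/(2N)}$, and weighting by $\Pr(\Omega_j)=1/k$ and summing over the $k$ clusters reproduces exactly the stated second term $3\sqrt{k\ln(2k/\delta)/(2N)}$.

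The main work is bounding the per-cluster complexity $\mathcal{R}_{N_j}(\Fcal_j)$. I would treat $f_j=g_j\circ e$ as a single feedforward network of depth $L+Q$ (the depth-$Q$ shared encoder $e$ composed with the depth-$L$ cluster head $g_j$), with $1$-Lipschitz positive-homogeneous activations and inputs bounded by $B_j$, and invoke the norm-based bound of Golowich, Rakhlin and Shamir: $\mathcal{R}_{N_j}(\Fcal_j)\le \frac{B_j(\sqrt{2\log 2\,(L+Q)}+1)\left(\prod_{l=1}^L M_l^j\right)\left(\prod_{l=1}^Q M_l^0\right)}{\sqrt{N_j}}$. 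The crucial point is that the shared-encoder factor $\prod_{l=1}^Q M_l^0$ and the depth constant $\sqrt{2\log 2\,(L+Q)}+1$ are identical across all clusters, so they factor out of the eventual sum and are absorbed into the $k$-independent constant $\zeta_1=2\rho^{-1}(\sqrt{2\log 2\,(L+Q)}+1)\prod_{l=1}^Q M_l^0$, the $2\rho^{-1}$ coming from the symmetrization and contraction above.

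Putting these together, the $\frac{2}{\rho}\mathcal{R}_{N_j}(\Fcal_j)$ term for cluster $j$, weighted by $1/k$ with $N_j=N/k$ substituted, becomes $\frac{\zeta_1 B_j\prod_{l=1}^L M_l^j}{\sqrt{kN}}$; summing over $j$ yields $\frac{\zeta_1\sum_{j}B_j\prod_{l=1}^L M_l^j}{\sqrt{kN}}$, matching the first term. I expect the \textbf{main obstacle} to be the bookkeeping of the composed-network Rademacher bound: verifying that the shared encoder can be pulled through the composition so that its weight-norm product \emph{multiplies} each cluster head rather than interacting with it, and handling rigorously the conditioning on the (random) cluster memberships. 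The uniform assumptions on $\Pr(\Omega_j)$ and $N_j$ are precisely what make the recombination clean; relaxing them (as in the appendix) requires carrying the individual weights $\Pr(\Omega_j)$ and $N_j$ through every step of the decomposition and union bound.
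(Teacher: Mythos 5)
Your proposal follows essentially the same route as the paper's proof: a cluster-wise decomposition of the risk using disjointness of the $\Omega_j$ (the paper's Theorem \ref{thm:4}), the Mohri et al.\ margin bound applied conditionally per cluster with a union bound yielding the $\ln(2k/\delta)$ factor, the Golowich--Rakhlin--Shamir norm-based bound on $\hat\Rcal_{N/k}(\Fcal_j)$ for the composed network $g_j \circ e$ with the shared-encoder factor $\prod_{l=1}^Q M_l^0$ pulled out into $\zeta_1$, and recombination under $\Pr(x\in\Omega_j)=1/k$, $N_j=N/k$. The only minor difference is that you carry the $\delta/k$ union bound consistently from the start (the paper's intermediate displays briefly write $\ln(2/\delta)$ before switching to $\ln(2k/\delta)$), which is a cleaner bookkeeping of the same argument.
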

The proof is presented in Appendix \ref{app:1}. Theorem \ref{thm:5} shows the generalization bound with the following:
\begin{itemize}[noitemsep,topsep=0pt,labelindent=0em,leftmargin=*]
\item 
Increasing $k$ can reduce the empirical loss  $\EE_{x,y}[\ell_{\rho}(f^{k}(x),y)]$ (since it increases the expressive power), resulting in a tendency for a better expected error $\EE_{x,y}[\ell_{01}(f^{k}(x),y)]$.
\item 
Increasing  $k$  increases the last term   $\sqrt{\frac{k\ln(2k/\delta)}{2N}}$ , resulting in  a tendency for a worse expected error $\EE_{x,y}[\ell_{01}(f^{k}(x),y)]$.
\item
Increasing  $k$  can reduce the complexity term  $\frac{\left(\sum _{j=1}^k B_j \prod_{l=1}^L M_l^{j} \right)}{\sqrt{kN}}$ (since (1) each sub-network only needs to learn a simpler classifier with larger $k$, resulting in a smaller value of $\prod_{l=1}^L M_l^{j}$; (2) each domain decrease and hence $B_j$ decrease as $k$ increase), resulting in a tendency for a better expected error $\EE_{x,y}[\ell_{01}(f^{k}(x),y)]$.
\end{itemize}

Not surprisingly, the generalization gap is inversely dependent on $N$, the total number of data points, indicating that more observations
will lead to a more accurate model. There is no simple relationship between $k$ 
suggesting
that the best number of clusters might have to be found empirically.

\begin{figure*}[!htb]
     \centering
     \begin{subfigure}[b]{0.16\textwidth}
         \centering
         \includegraphics[width=\textwidth]{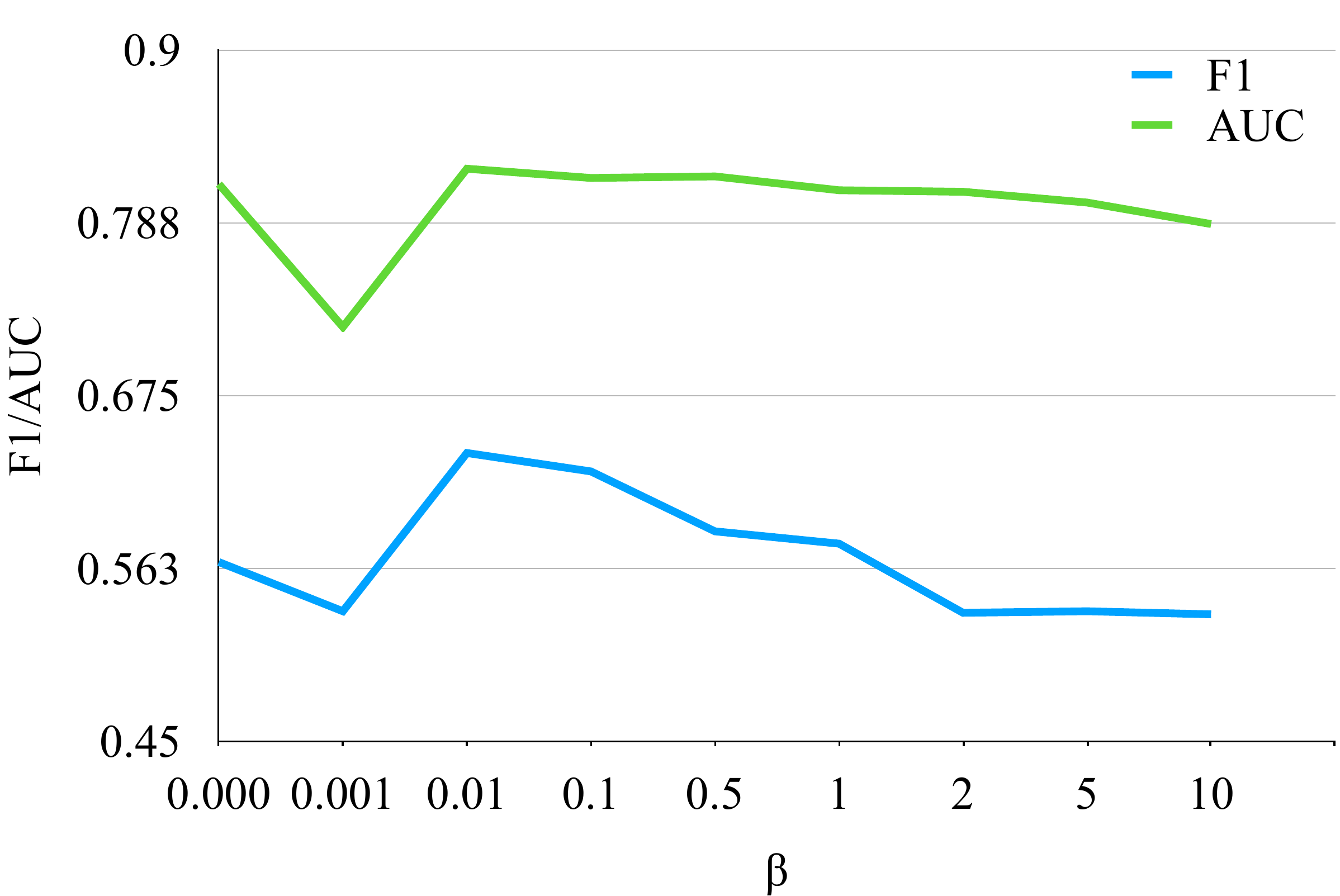}
        \caption{F1/AUC vs. $\beta$}
        \label{fig:f1_auc_beta}
    \end{subfigure}
     \hfill
     \begin{subfigure}[b]{0.16\textwidth}
         \centering
         \includegraphics[width=\textwidth]{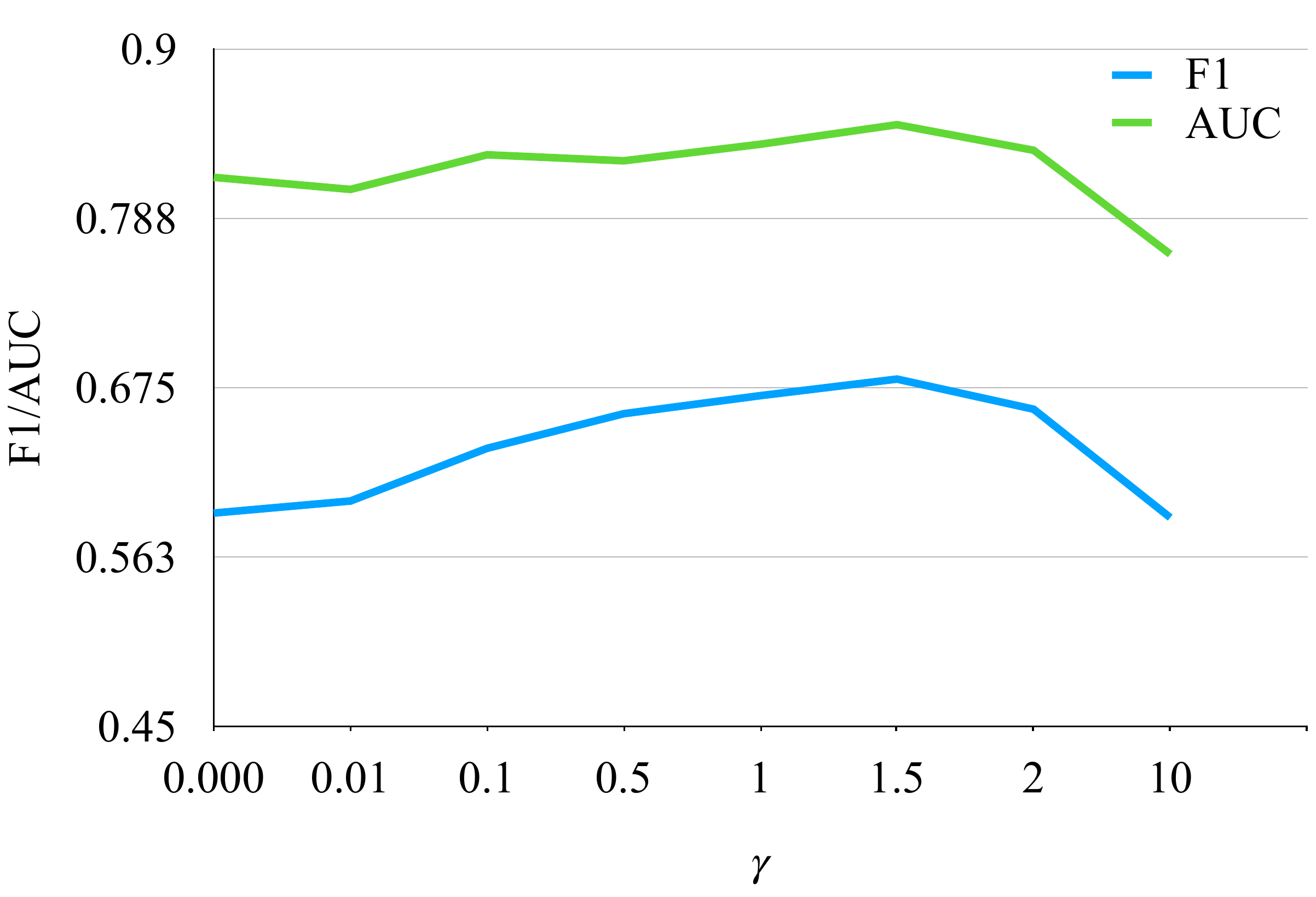}
        \caption{F1/AUC vs. $\gamma$}
        \label{fig:f1_auc_gamma}
     \end{subfigure}
     \hfill
     \begin{subfigure}[b]{0.16\textwidth}
         \centering
         \includegraphics[width=\textwidth]{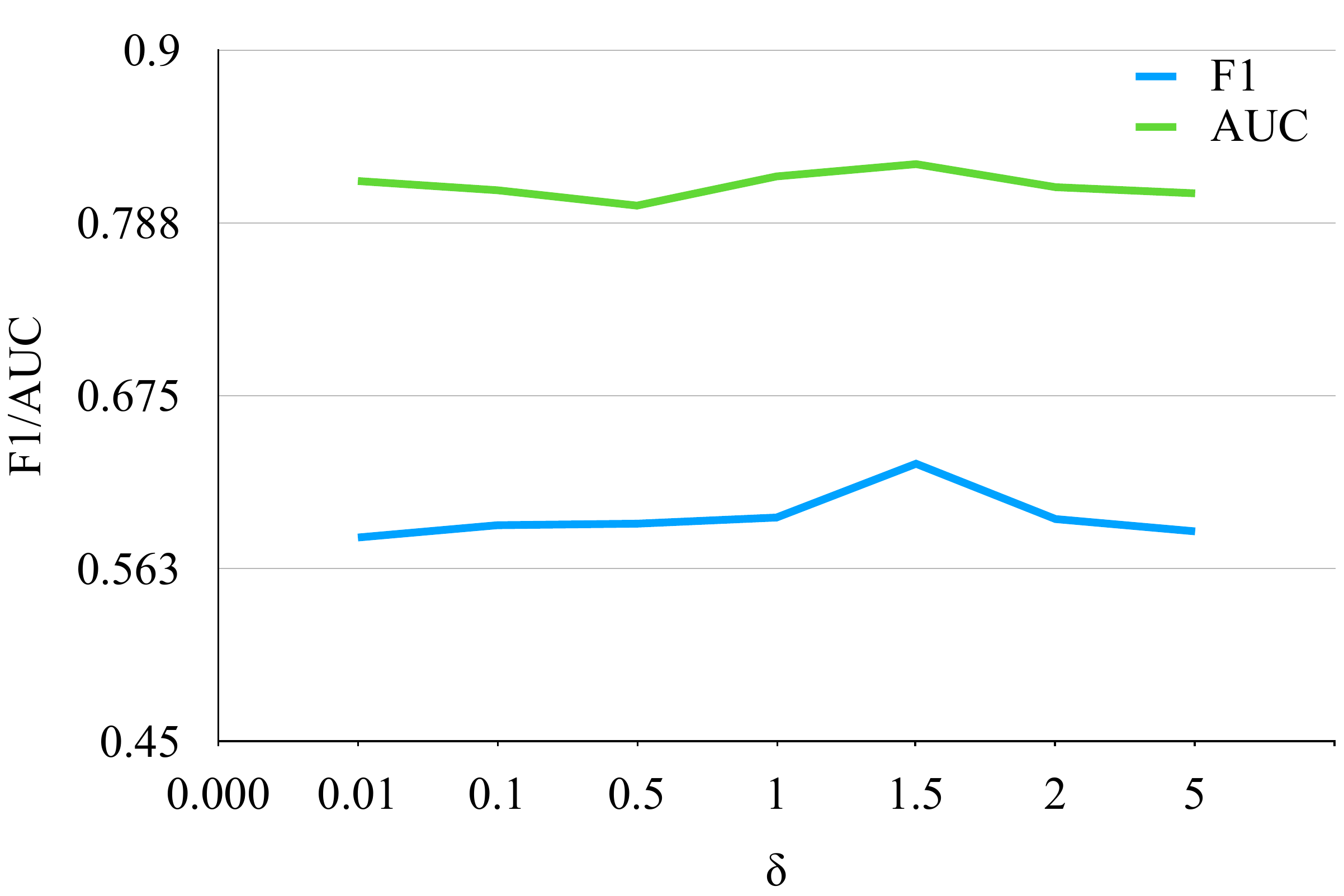}
        \caption{F1/AUC vs. $\delta$}
        \label{fig:f1_auc_delta}
     \end{subfigure}
     \hfill
     \begin{subfigure}[b]{0.16\textwidth}
         \centering
         \includegraphics[width=\textwidth]{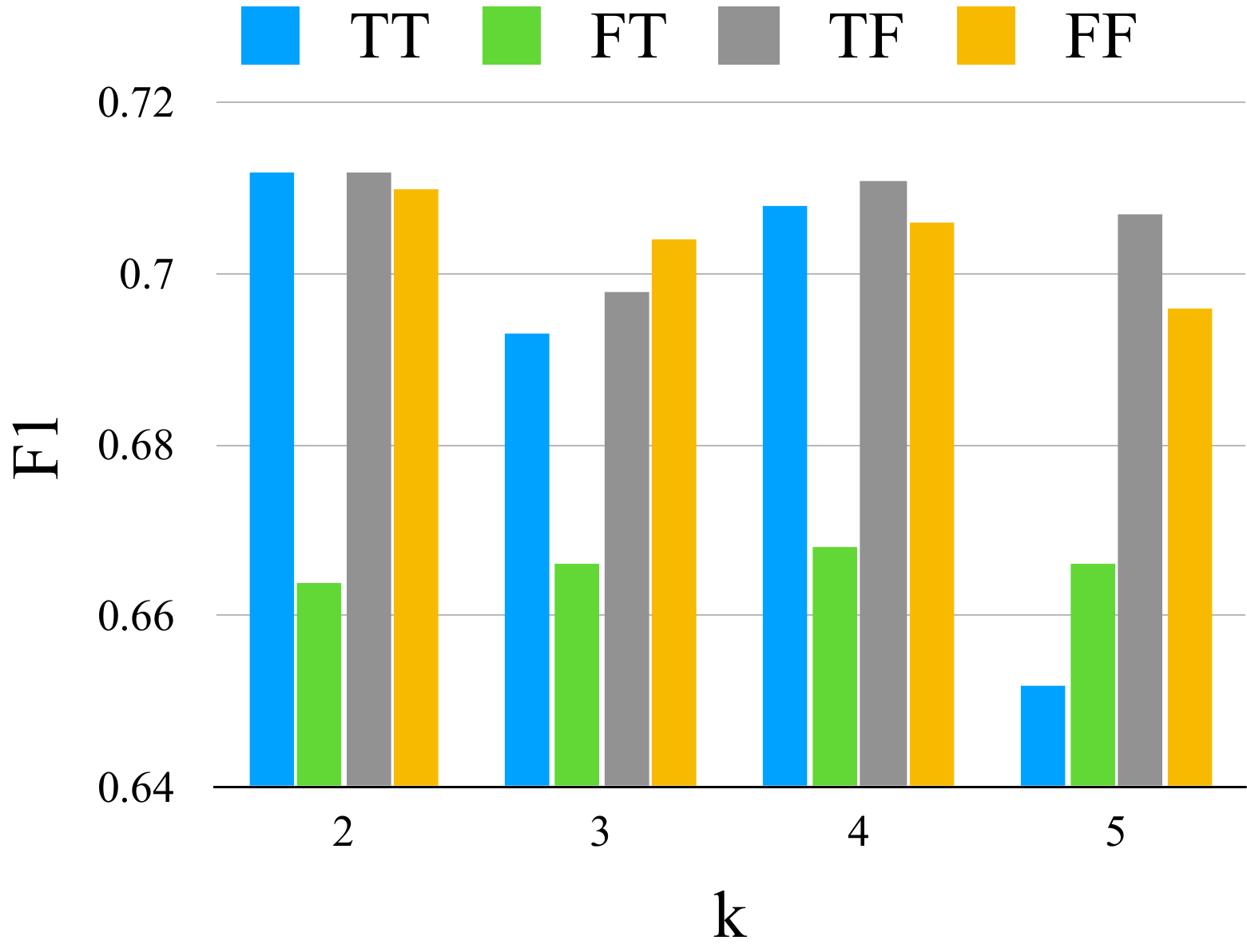}
         \caption{F1}
         \label{fig:f1_cic}
     \end{subfigure}
     \hfill
     \begin{subfigure}[b]{0.16\textwidth}
         \centering
         \includegraphics[width=\textwidth]{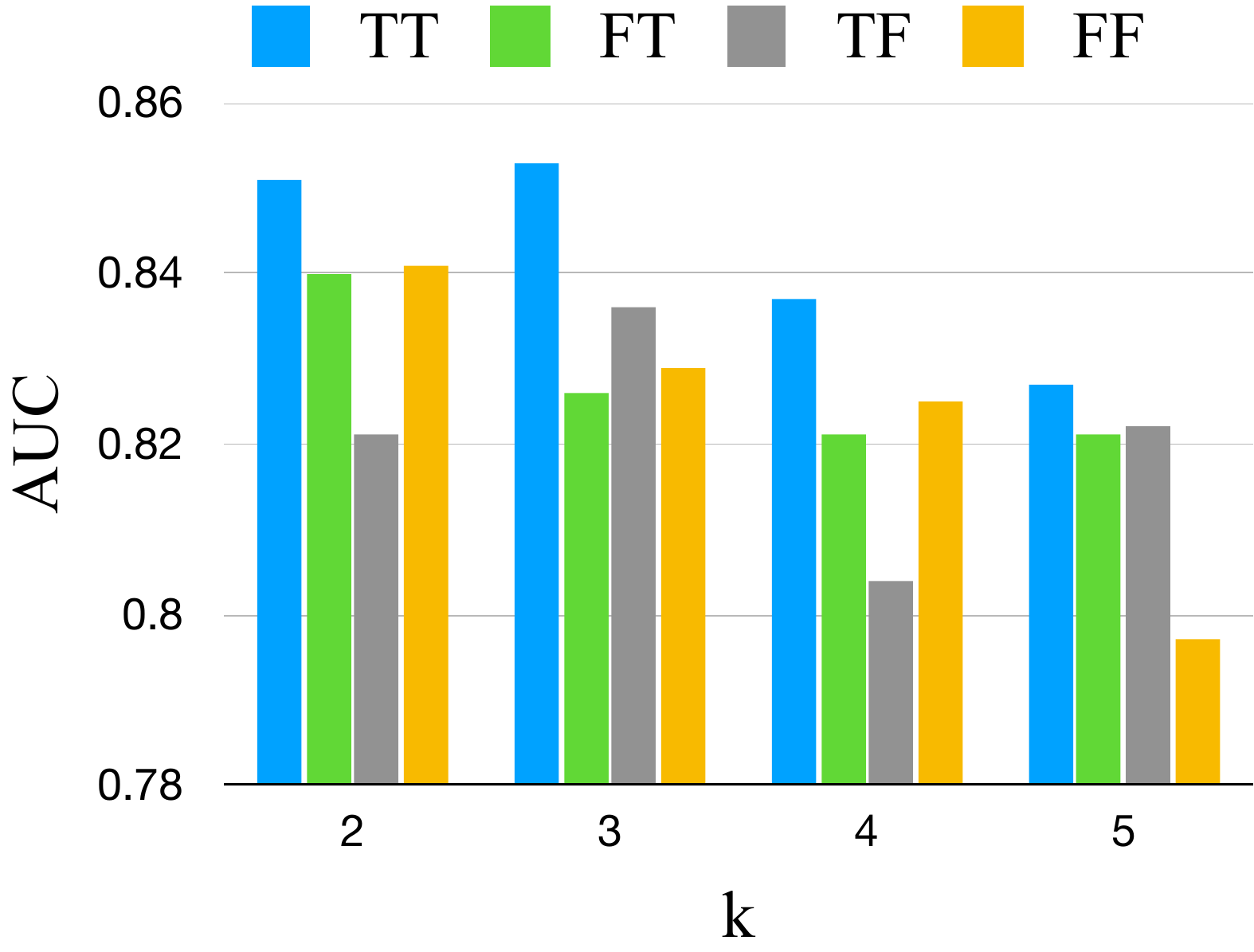}
         \caption{AUC}
         \label{fig:auc_cic}
     \end{subfigure}
     \hfill
     \begin{subfigure}[b]{0.16\textwidth}
         \centering
         \includegraphics[width=\textwidth]{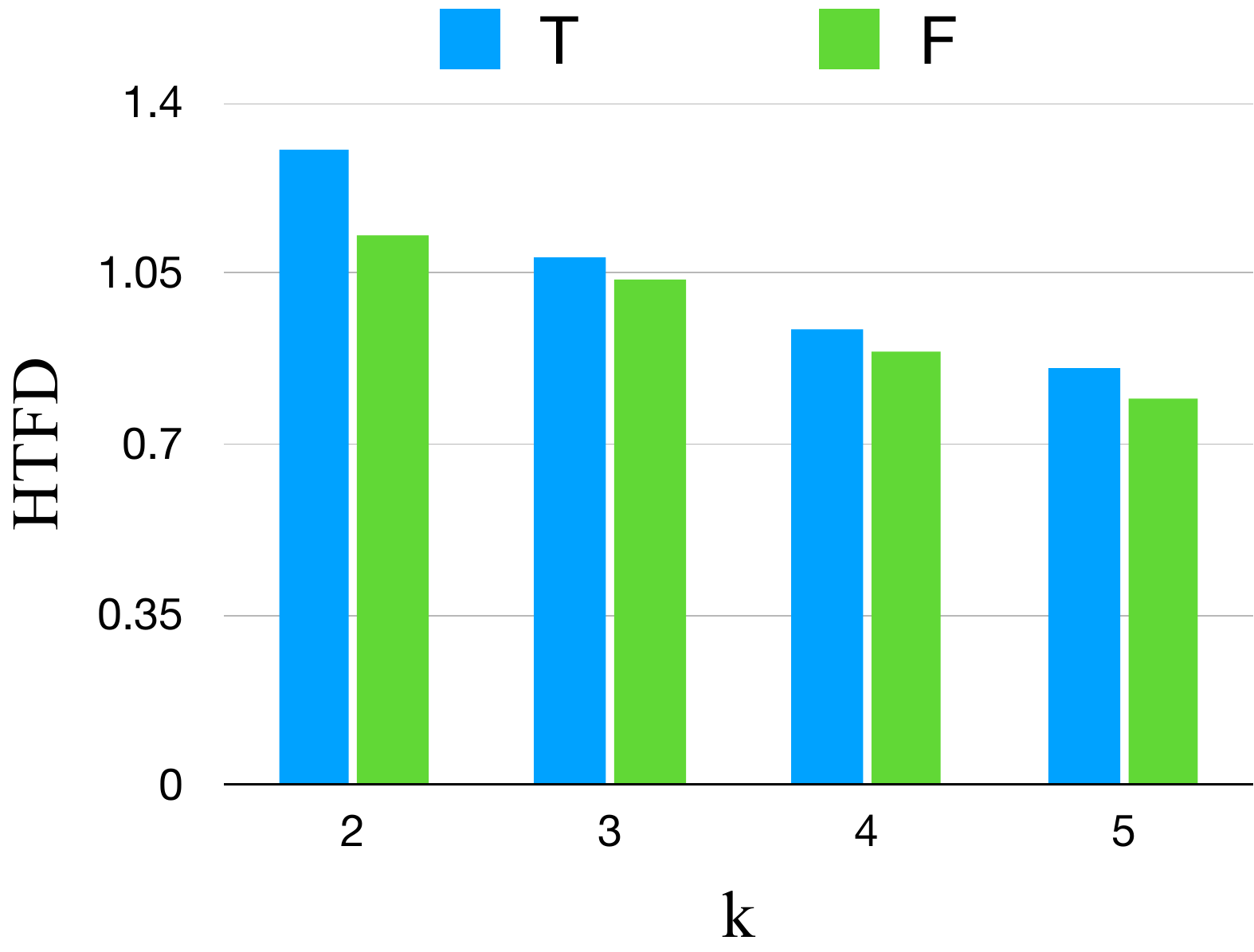}
         \caption{HTFD}
         \label{fig:sil_cic}
     \end{subfigure}
    \caption{Sensitivity Analysis (a--c) and Ablation Studies (d--f) on the CIC dataset}
    \label{fig:sensitivity_analysis}
\end{figure*}

\section{Experiments}
\label{sec:expts}

We evaluate the classification and clustering performance of \deepmethod\ on 6 large clinical datasets.
We analyze \deepmethod\ through an ablation study and investigate its sensitivity to model hyperparameters.
Finally, we present a case study that illustrates its utility in clinical risk modeling.


\subsection{Data}
We use 6 large clinical datasets; Table \ref{tab:datasets} lists the number of observations and features in each of them.
WID data is from the Women In Data Science challenge \cite{lee_wids} to predict patient mortality.
Diabetes data is from the UCI Repository where the task is readmission prediction.
All the remaining datasets have data from the 
\textsc{MIMIC} III ICU Database \cite{fei2020mimic}.
CIC and Sepsis have been used for mortality and sepsis prediction challenges in Physionet \cite{silva2012cic,reyna2019sepsis}.
Kidney and Respiratory data has been extracted by us for the tasks of Acute Kidney Failure and Acute Respiratory Distress Syndrome prediction.
All the above tasks are posed as binary classification problems. We also derive a multiclass dataset (CIC-LOS) from the CIC dataset where we predict Length of Stay (LOS), discretized into 3 classes (based on 3 quartiles). We consider a random 57-18-25 split to create training, validation and testing splits in each dataset.
More details of data preprocessing and feature extraction are in Appendix \ref{app:data_preprocessing}.

\begin{table}[!htb]
\footnotesize
\captionsetup{font=footnotesize}
\centering
\begin{adjustbox}{max width=0.5\textwidth}
\begin{tabular}{lrrr}
        \toprule
        \textbf{Dataset} & \#Instances  & \#Features & MAJ \\
        \midrule   
        WID Mortality \cite{lee_wids} & {$91711$} & {$241$} & $0.91$\\
        Diabetes \cite{UCL} & {$100000$} & {$9$} & $0.54$\\
        Sepsis \cite{reyna2019sepsis} & {$40328$} & {$89$} & $0.93$\\
        Kidney \cite{fei2020mimic} & {$14087$} & {$89$} & $0.51$\\
        Respiratory \cite{fei2020mimic} & {$22477$} & {$89$} & $0.92$\\
        CIC \cite{silva2012cic} & {$12000$} & {$117$} & $0.85$\\
        CIC-LOS & {$12000$} & {$116$} & 3 classes\\
        \bottomrule
    \end{tabular}
\end{adjustbox}
\caption{Summary of datasets. MAJ: proportion of majority class instances (for binary labels).}
    \label{tab:datasets}
\end{table}

\begin{table*}[!htb]
\centering
\begin{adjustbox}{max width=\textwidth}
\begin{tabular}{ccccccccccccccc}
\toprule

\multicolumn{1}{l}{} & \multicolumn{1}{l}{} &  &  & \multicolumn{1}{l}{\textbf{HTFD}} & & \multicolumn{1}{l}{} & \multicolumn{1}{l}{} & \multicolumn{1}{l}{} &  & \multicolumn{1}{l}{} & \multicolumn{1}{l}{\textbf{AUC}} & & \multicolumn{1}{l}{} & \multicolumn{1}{l}{} \\
\midrule

\textbf{Dataset} & \textbf{k} & \textbf{KMeans} & \textbf{DCN} & \textbf{IDEC} & \textbf{DMNN} & \textbf{\deepmethod} & & \multicolumn{1}{c}{\textbf{Baseline}} & \multicolumn{1}{c}{\textbf{SAE}} & \textbf{KMeans-Z} & \textbf{DCN-Z} & \textbf{IDEC-Z} & \textbf{DMNN} & \textbf{\deepmethod}\\
\midrule

\textbf{CIC} & 1 & \multicolumn{1}{c}{-} & \multicolumn{1}{c}{-} & \multicolumn{1}{c}{-} & \multicolumn{1}{c}{-} & \multicolumn{1}{c}{-} & & \multicolumn{1}{c}{$0.628$} & \multicolumn{1}{c}{$0.693$} & $0.804$ & $0.819$ & $0.817$ & $0.721$ & $\mathbf{0.835}$  \\
 & 2 & $1.125$ & NA & ${1.175}$ & $1.233$ & $\mathbf{1.302}$ &  &  &  & $0.705$ & NA & $0.787$ & $0.784$ & $\mathbf{0.851}$ \\
 & 3 & $1.072$ & NA & $1.081$ & $\mathbf{1.173}$ & $\mathbf{1.082}$ &  &  &  & $0.63$ & NA & $0.817$ & $0.799$ & $\mathbf{0.853}$ \\
 & 4 & $0.992$ & NA & $0.971$ & $\mathbf{1.107}$ & $0.934$ & & &  & $0.593$ & NA & $0.811$ & $0.808$ & $\mathbf{0.837}$ \\
\midrule

\textbf{Sepsis} & 1 & \multicolumn{1}{c}{-} & \multicolumn{1}{c}{-} & \multicolumn{1}{c}{-} & \multicolumn{1}{c}{-} & \multicolumn{1}{c}{-} & & \multicolumn{1}{c}{$0.654$} & \multicolumn{1}{c}{$0.681$} & $0.823$ & $0.772$ & $0.823$ & $0.670$ & $\mathbf{0.908}$ \\
 & 2 & $1.347$ & $1.365$ & $1.406$ & $\mathbf{1.476}$ & $1.283$ &  & & & $0.598$ & $0.679$ & $0.806$ & $0.767$ & $\mathbf{0.909}$ \\
 & 3 & $\mathbf{1.455}$ & $1.376$ & $1.301$ & $1.351$ & $1.319$ &  & & & $0.58$ & $0.725$ & $0.772$ & $0.794$ & $\mathbf{0.899}$ \\
 & 4 & ${1.357}$ & NA & $\mathbf{1.361}$ & $1.333$ & $1.301$ &  & & & $0.55$ & NA & $0.804$ & $0.829$ & $\mathbf{0.904}$ \\
\midrule

\textbf{AKI} & 1 & \multicolumn{1}{c}{-} & \multicolumn{1}{c}{-} & \multicolumn{1}{c}{-} & \multicolumn{1}{c}{-} & \multicolumn{1}{c}{-} & & \multicolumn{1}{c}{$0.584$} & \multicolumn{1}{c}{$0.552$} & $0.601$ & $0.591$ & $0.613$ & $0.617$ & $\mathbf{0.694}$ \\
 & 2 & ${1.108}$ & $1.014$ & $\mathbf{1.11}$ & $0.760$ & $1.044$ &  & & & $0.44$ & $0.601$ & $0.585$ & $0.690$ & $\mathbf{0.69}$ \\
 & 3 & $1.164$ & $1.166$ & $1.166$ & $0.803$ & $\mathbf{1.168}$ &  & & & $0.467$ & $0.582$ & $0.57$ & $\mathbf{0.689}$ & ${0.67}$ \\
 & 4 & $1.148$ & $1.14$ & ${1.141}$ & $0.767$ & $\mathbf{1.172}$ &  & & & $0.439$ & $0.601$ & $0.596$ & $\mathbf{0.691}$ & $0.646$ \\
\midrule

\textbf{ARDS} & 1 & \multicolumn{1}{c}{-} & \multicolumn{1}{c}{-} & \multicolumn{1}{c}{-} & \multicolumn{1}{c}{-} & \multicolumn{1}{c}{-} & & \multicolumn{1}{c}{$0.612$} & \multicolumn{1}{c}{$0.631$} & $0.726$ & $0.661$ & $0.722$ & $0.601$ & $\mathbf{0.768}$ \\
 & 2 & $1.25$ & $1.228$ & $1.283$ & $0.581$ & $\mathbf{1.312}$ &  & & & $0.628$ & $0.676$ & $0.638$ & $\mathbf{0.758}$ & $0.73$ \\
 & 3 & $1.17$ & NA & $1.171$ & $1.156$ & $\mathbf{1.282}$ & & &  & $0.644$ & NA & $0.666$ & $\mathbf{0.766}$ & $0.708$ \\
 & 4 & $1.154$ & NA & $1.119$ & $1.114$ & $\mathbf{1.262}$ & & &  & $0.616$ & NA & $0.635$ & $0.697$ & $\mathbf{0.744}$ \\
\midrule

\textbf{WID-M} & 1 & \multicolumn{1}{c}{-} & \multicolumn{1}{c}{-} & \multicolumn{1}{c}{-} & \multicolumn{1}{c}{-} & \multicolumn{1}{c}{-} & & \multicolumn{1}{c}{$0.711$} & \multicolumn{1}{c}{$0.703$} & $0.853$ & $0.814$ & $0.853$ & $0.686$ & $\mathbf{0.883}$ \\
 & 2 & $1.285$ & $1.23$ & $\mathbf{1.571}$ & $1.438$ & $1.534$ &  & & & $0.731$ & $0.739$ & $0.8$ & $0.846$ & $\mathbf{0.883}$ \\
 & 3 & $1.328$ & NA & $1.501$ & $1.404$ & $\mathbf{1.514}$ & & &  & $0.709$ & NA & $0.809$ & $0.853$ & $\mathbf{0.874}$ \\
 & 4 & $1.41$ & NA & $1.388$ & $1.363$ & $\mathbf{1.469}$ & & &  & $0.691$ & NA & $0.806$ & $\mathbf{0.865}$ & ${0.859}$ \\
\midrule

\textbf{Diabetes} & 1 & \multicolumn{1}{c}{-} & \multicolumn{1}{c}{-} & \multicolumn{1}{c}{-} & \multicolumn{1}{c}{-} & \multicolumn{1}{c}{-} & & \multicolumn{1}{c}{$0.566$} & \multicolumn{1}{c}{$0.551$} & $0.58$ & $0.574$ & $0.582$ & ${0.578}$ & $\mathbf{0.587}$ \\
 & 2 & $1.633$ & ${1.554}$ & ${1.707}$ & $1.663$ & $\mathbf{1.802}$ &  & & & $0.53$ & $0.566$ & $0.567$ & ${0.581}$ & $\mathbf{0.587}$ \\
 & 3 & $1.593$ & $\mathbf{1.53}$ & $\mathbf{1.698}$ & $1.245$ & $1.67$ & & &  & $0.514$ & $0.568$ & $0.563$ & ${0.579}$ & $\mathbf{0.585}$ \\
 & 4 & $\mathbf{1.611}$ & $1.398$ & $1.578$ & $1.381$ & ${1.609}$ &  & & & $0.507$ & $0.569$ & $0.563$ & ${0.581}$ & $\mathbf{0.586}$ \\
\midrule

\textbf{CIC-LOS} & 1 & \multicolumn{1}{c}{-} & \multicolumn{1}{c}{-} & \multicolumn{1}{c}{-} & \multicolumn{1}{c}{-} & \multicolumn{1}{c}{-} & & \multicolumn{1}{c}{$0.646$} & \multicolumn{1}{c}{$0.659$} & $0.652$ & $0.639$ & $0.642$ & \multicolumn{1}{c}{-} & $\mathbf{0.663}$ \\
 & 2 & $1.042$ & $1.003$ & $\mathbf{1.148}$ & \multicolumn{1}{c}{-} & $1.101$ & & & & $0.575$ & $0.632$ & $0.648$ & \multicolumn{1}{c}{-} & $\mathbf{0.672}$ \\
 & 3 & $1.038$ & $0.99$ & $\mathbf{1.052}$ & \multicolumn{1}{c}{-} & ${0.925}$ & & & & $0.567$ & $0.636$ & $0.642$ & \multicolumn{1}{c}{-} & $\mathbf{0.664}$\\
 & 4 & $\mathbf{0.995}$ & NA & $0.955$ & \multicolumn{1}{c}{-} & ${0.871}$ &  & & & $0.539$ & NA & $0.64$ & \multicolumn{1}{c}{-} & $\mathbf{0.664}$\\
\bottomrule

\end{tabular}
\end{adjustbox}
\caption{Clustering and Classification performance. Best values in bold. NA indicates no results due to an empty cluster. Note that $\operatorname{HTFD}$ values are not defined for $k=1$.}
\label{tab:clustering}
\end{table*}

\subsection{Classification and Clustering Performance}
\subsubsection{Baselines}
As baselines for the classification task, we use 3 different kinds of techniques.
The first is a feedforward neural network ({\bf baseline}) that only performs classification.
It's architecture is set to be identical to that of a combination of \deepmethod's encoder and a single local network.
The second is the Supervised Autoencoder ({\bf SAE}) \cite{leli2018supervised} that uses reconstruction loss as an unsupervised regularizer (similar architecture as \textbf{baseline}).
The third category uses the common `cluster-then-predict' approach, where clustering is independently performed first and classifiers are trained on each cluster (denoted by {\bf -Z}).
In this category, we compare with
$3$ clustering methods \textbf{$k$-means} (where we use an autoencoder to get embeddings which are then clustered using $k$-means), Deep Clustering Network {\bf DCN} \cite{yang2017towards} and Improved Deep Embedded Clustering ({\bf IDEC}) \cite{Guo2017IDEC}. 
We also use Deep Mixture Neural Networks ({\bf DMNN}) \cite{li2020dmnn} that follows this paradigm.
Note that their implementation only supports binary classification.
We use stochastic cohort sampling during prediction in all baselines that use clustering for a fair comparison.
We compare the clustering performance 
obtained by \deepmethod\ 
with that of DMNN, $k$-means, DCN and IDEC.




\subsubsection{Performance Metrics}
Standard classification metrics, Area Under the Receiver Operating Characteristic (AUC) and F1 scores, are used for binary classification. For multiclass setting, we use one-vs-rest algorithm to calculate AUC.

To evaluate clustering we use Silhouette scores and another score, HTFD, described below, that evaluates feature discrimination in the inferred clusters.
A common practice in clustering studies (e.g., \cite{li2015identification}) is to check, for each feature, if 
there is a statistically significant difference in
its distribution across the clusters.
A low p-value ($< 0.05$) indicates significant difference.
To aggregate this over all features $F$, we define, for each cluster in a clustering, the metric
\textbf{H}ypothesis 
\textbf{T}esting 
based \textbf{F}eature \textbf{D}iscrimination ($\operatorname{HTFD}$):

\[
\operatorname{HTFD}(C_i) = \frac{1}{|F|} \sum_{f\in F} -\ln\left(\text{p-value}(X_i^f, X^f)\right)*0.05
\]
where $X_i^f$ denotes the values of feature $f$ for data points in cluster $C_i$ and $X^f$ denotes the 
feature values of data points in all clusters except $C_i$.
Student's t-test is used to obtain the p-value.
The negative logarithm of p-value is multiplied by the significance level, to normalize and obtain a measure wherein higher values indicate better clustering. To obtain an overall value for the entire clustering, we take the average of individual, cluster-wise  HTFD values.



\subsubsection{Hyperparameter Settings}

For \textbf{DCN-Z} and \textbf{IDEC-Z}, we use default parameters as suggested by their authors. For \deepmethod\ we let $(\beta, \gamma, \delta) = (0.5, 1.5, 1)$. We select these default values by considering the sensitivity analysis (see Fig. \ref{fig:sensitivity_analysis}). 
The common encoder has layers of size 128-64-32 and local expert network has layers of size 64-32-16-8. The size of latent embeddings is $20$. 
We evaluate all the methods for four different values of $K = 1,2,3,4$.

\subsubsection{Results}
Table \ref{tab:clustering} shows the performance of all methods compared, on classification and clustering.
We observe that cluster-then-predict methods (\textbf{*-Z}) and DMNN generally perform better than SAE, which suggests that clustering-based regularization is more effective than reconstruction-based regularization.
SAE outperforms the baseline on most datasets.
These results align with previous research that show that unsupervised regularization aids supervised learning.
\deepmethod\ outperforms all methods on all datasets  for at least one input cluster size. 
Also note that \deepmethod's performance does not degrade with increasing number of clusters as seen for KMeans-Z.
Overall, for classification, \deepmethod\ consistently outperforms all the baselines on all datasets.

With respect to clustering, the performance of \deepmethod\ is superior to the best baseline in 5 datasets, for some values of $k$.
For other values of $k$, and on other datasets, the performance values are comparable or lower.
F1 scores and Silhouette scores are in Appendix \ref{app:extended_results}, where the performance trends are similar.
Overall, the results show that \deepmethod\ achieves clustering performance that is comparable to, and 
achieves classification performance that is considerably better than, the state-of-the-art alternatives respectively.


\subsection{Sensitivity Analysis and Ablation Studies}

We evaluate the effect of hyperparameters $\beta, \gamma$ and $\delta$ on \deepmethod.
We individually vary each hyperparameter while setting other two to $0$ and measure the classification performance.
Figures \ref{fig:f1_auc_beta}, \ref{fig:f1_auc_gamma} and  \ref{fig:f1_auc_delta} show the results for the CIC dataset. We observe that both the F1 and AUC values are fairly robust to changes in their values.

Our stochastic cohort sampling approach may or may not be used independently during training and prediction.
To evaluate it's effect we evaluate all four combinations through an ablation study.
We denote the combinations by TT, TF, FT and FF, where the first and second positions indicate training and prediction respectively.
T indicates use of our approach while F indicates that it is not used.
Figure \ref{fig:f1_cic} and \ref{fig:auc_cic} show the F1 and AUC scores for all 4 combinations on the CIC dataset (results on Sepsis dataset in Appendix \ref{app:extended_ablation_analysis}). The best performance is achieved when the approach is used both in training and prediction (TT).
For clustering, there is no prediction. 
Figure \ref{fig:sil_cic} shows that the performance is better with rather than without sampling.
All the results shown are averages over $5$ runs. 

\subsection{Case Study: Mortality Prediction}
\label{sec:case_study}
As a case study, we illustrate the use of \deepmethod\ on the CIC mortality prediction data for $k=3$ clusters.
Since clustering is done simultaneously with classification, the clusters are influenced by the target label, i.e., mortality indicator, and thus, by design, \deepmethod\ is expected to find mortality subtypes.
In other words, we expect the inferred clusters to have different risk factors tailored to each underlying subpopulation.
To evaluate this, we examine feature importances for each cluster's local risk model.
We distil the knowledge of the local networks 
into a simpler student model \cite{gou2021knowledge},
a Gradient Boosting Classifier, in our case,
that provides feature importance values.

\begin{table}
\centering
\footnotesize
\captionsetup{font=footnotesize}
\begin{tabular}{@{}lll@{}}
\toprule
$|C_1|=2458$ & $|C_2|=1890$ & $|C_3|=2402$ \\
$\%D = 0.115$ & $\%D = 0.032$ & $\%D = 0.254$ \\ \midrule
\cellcolor{yellow!25}{Age} & \cellcolor{yellow!25}{Age} & \cellcolor{yellow!25}{GCS\_last} \\
\cellcolor{yellow!25}{GCS\_last} & {UrineOutputSum} & \cellcolor{yellow!25}{BUN\_last} \\
\cellcolor{yellow!25}{BUN\_last} & \cellcolor{yellow!25}{GCS\_last} & \cellcolor{yellow!25}Age \\
\cellcolor{blue!25}{RespRate\_median} & \cellcolor{blue!25}{BUN\_first} & Lactate\_last \\
\cellcolor{blue!25}{BUN\_first} & CSRU & Bilirubin\_last \\
Weight\_last & GCS\_lowest & Length\_of\_stay \\
\cellcolor{blue!25}{HR\_highest} & MechVentDuration & GCS\_median \\
Weight\_first & \cellcolor{yellow!25}{BUN\_last} & SOFA \\
\cellcolor{blue!25}{Weight} & \cellcolor{blue!25}{RespRate\_median} & \cellcolor{blue!25}{Weight} \\
RespRate\_highest & \cellcolor{blue!25}{HR\_highest} & HCO3\_last \\ \bottomrule
\end{tabular}
\caption{Important Features for mortality prediction in respective clusters. Features common in all three clusters are highlighted in yellow while those common in two are highlighted in purple. Rest are unique to their respective clusters.}
\label{tab:important_features}
\end{table}

In Table \ref{tab:important_features}, we list the cluster sizes $C_i$, proportion of patients who do not survive ($\%D$) and the top 10 most important features in each cluster.
We see that, as expected, the risk models created by \deepmethod\ deem different sets of features as important 
for predicting mortality. Out of the $10$ most important features, each risk model contains $3$ common features, $4$, $3$ and $1$ features common across two models respectively and $3$, $4$ and $6$ features unique to respective clusters.
Appendix \ref{app:cic_clusters} presents further analysis of the clusters.

\section{Conclusion}
We design  \deepmethod\, a model for combined clustering and classification and theoretically analyze its generalization properties. 
Our model can be viewed as a mixture of expert networks trained on inferred clusters in the data.
Our experiments show that both regularization through unsupervised clustering and stochastic sampling strategy during training lead to substantial improvement in classification performance.
We demonstrate the efficacy of \deepmethod\ on several clinical datasets, for predictive modeling tailored to subpopulations inferred from the data.

The clustering performance of \deepmethod\ is comparable to that of other deep clustering methods  and may be improved further.
This can be explored in future work along with ways to combine hierarchical clustering algorithms with supervised models and applications in other domains.

\bibliographystyle{named}
\bibliography{ijcai22}

\appendix
\onecolumn

\section{Analysis} \label{app:1}
\subsection{General case}

The following lemma is from \cite[Theorem 3.1]{bartlett2002rademacher,mohri2012foundations}: 
\begin{lemma} \label{lemma:1}
Let $\Fcal$  be a  set of maps $x\mapsto f(x)$. Suppose that $0 \le \ell\left(q, y\right) \le \lambda$ for any $q \in\{f(x):f\in \Fcal, x\in \Xcal \}$ and  $y \in \Ycal$. Then, for any $\delta>0$, with probability at least $1-\delta$ (over  an i.i.d. draw of $N$ i.i.d.  samples  $((x_i, y_i))_{i=1}^N$), the following holds: for all maps $ f\in\Fcal$,
\begin{align} 
\EE_{x,y}[\ell(f(x),y)]
\le \frac{1}{N}\sum_{i=1}^{N} \ell(f(x_i),y_i)+2\hat \Rcal_{N}(\ell \circ \Fcal)+3\lambda \sqrt{\frac{\ln(1/\delta)}{2N}},   
\end{align}
where  $\hat \Rcal_{N}(\ell \circ \Fcal):=\EE_{\sigma}[\sup_{f \in \Fcal}\frac{1}{N} \sum_{i=1}^N \sigma_i \ell(f(x_{i}),y_{i})]$ and $\sigma_1,\dots,\sigma_N$ are independent uniform random variables taking values in $\{-1,1\}$.  
\end{lemma}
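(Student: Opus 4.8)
The plan is to prove Lemma~\ref{lemma:1} along the classical two-step route for Rademacher-complexity bounds: first establish concentration of a uniform-deviation functional via McDiarmid's bounded-differences inequality, and then control its expectation by a symmetrization argument. Over the sample $S=((x_i,y_i))_{i=1}^N$, define the one-sided uniform deviation
\[
\Phi(S) = \sup_{f\in\Fcal}\left(\EE_{x,y}[\ell(f(x),y)] - \frac{1}{N}\sum_{i=1}^{N} \ell(f(x_i),y_i)\right).
\]
Since the claimed inequality is exactly the statement that $\Phi(S)$ does not exceed $2\hat\Rcal_{N}(\ell\circ\Fcal)+3\lambda\sqrt{\ln(1/\delta)/(2N)}$ with high probability, it suffices to produce a high-probability upper bound on $\Phi(S)$.

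First I would verify the bounded-differences property of $\Phi$. Replacing a single sample $(x_i,y_i)$ by $(x_i',y_i')$ changes the inner empirical average by at most $\lambda/N$ because $0\le\ell\le\lambda$; taking the supremum of the affected quantity over $f\in\Fcal$ then shows $|\Phi(S)-\Phi(S')|\le\lambda/N$. McDiarmid's inequality applied to $\Phi$ with per-coordinate bound $\lambda/N$ yields, with probability at least $1-\delta/2$,
\[
\Phi(S)\le \EE_S[\Phi(S)] + \lambda\sqrt{\frac{\ln(2/\delta)}{2N}}.
\]
Next I would bound $\EE_S[\Phi(S)]$ by symmetrization. Introducing an independent ghost sample $S'=((x_i',y_i'))_{i=1}^N$ and rewriting the population risk as $\EE_{S'}$ of the ghost empirical risk, Jensen's inequality pulls the supremum inside the ghost expectation, giving $\EE_S[\Phi(S)]\le \EE_{S,S'}\sup_{f}\frac{1}{N}\sum_{i}\bigl(\ell(f(x_i'),y_i')-\ell(f(x_i),y_i)\bigr)$. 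Since $(x_i,y_i)$ and $(x_i',y_i')$ are i.i.d., inserting independent signs $\sigma_i\in\{-1,1\}$ in front of each difference leaves the joint distribution unchanged; splitting the supremum across the two groups and exploiting the symmetry of $\sigma$ and $-\sigma$ then gives $\EE_S[\Phi(S)]\le 2\Rcal_N(\ell\circ\Fcal)$, where $\Rcal_N(\ell\circ\Fcal):=\EE_S[\hat\Rcal_{N}(\ell\circ\Fcal)]$ is the expected Rademacher complexity.

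Finally, to convert the expected complexity $\Rcal_N$ into the empirical $\hat\Rcal_N$ that appears in the statement, I would apply McDiarmid a second time, now to the map $S\mapsto\hat\Rcal_{N}(\ell\circ\Fcal)$. Changing one sample point perturbs each realized supremum by at most $\lambda/N$, so this functional again has bounded differences $\lambda/N$, yielding $\Rcal_N(\ell\circ\Fcal)\le\hat\Rcal_{N}(\ell\circ\Fcal)+\lambda\sqrt{\ln(2/\delta)/(2N)}$ with probability at least $1-\delta/2$. A union bound over the two McDiarmid events, combined with the factor $2$ that multiplies $\Rcal_N$, accumulates the deviation terms into the coefficient $3\lambda$ and gives the result (the stated $\ln(1/\delta)$ form follows by the usual constant simplification, tracking $\delta$ in place of $\delta/2$). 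The hard part is bookkeeping these constants so that the two concentration steps and the symmetrization factor combine precisely into the $3\lambda$ coefficient; the symmetrization step itself, though standard, requires care in justifying the Rademacher-sign insertion and the split of the supremum across the real and ghost samples.
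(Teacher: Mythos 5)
Your proof is correct, but it is worth noting that the paper does not prove this lemma at all: it is imported verbatim as a citation (Theorem 3.1 of Bartlett--Mendelson / Mohri et al.), so what you have written out is exactly the standard argument behind the cited result rather than a parallel to anything in the paper. Your two McDiarmid applications (to the uniform deviation $\Phi(S)$ and to $S\mapsto\hat\Rcal_N(\ell\circ\Fcal)$, each with bounded differences $\lambda/N$), the ghost-sample symmetrization giving $\EE_S[\Phi(S)]\le 2\Rcal_N(\ell\circ\Fcal)$, and the union bound accumulating $2\lambda\sqrt{\ln(2/\delta)/(2N)}+\lambda\sqrt{\ln(2/\delta)/(2N)}$ into the $3\lambda$ coefficient are all sound. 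One point you handle honestly but should state more firmly: the standard derivation you give yields $3\lambda\sqrt{\ln(2/\delta)/(2N)}$, and this is \emph{not} recoverable as the paper's $3\lambda\sqrt{\ln(1/\delta)/(2N)}$ by any ``constant simplification'' --- with your argument the $2$ inside the logarithm is genuinely needed for the union bound over the two concentration events. The $\ln(1/\delta)$ form holds only for the variant stated with the \emph{expected} Rademacher complexity and deviation coefficient $\lambda$ (not $3\lambda$), which requires just one McDiarmid application. In other words, the lemma as printed in the paper mixes the two standard versions, and your derivation quietly exposes this; since the paper's downstream results (e.g., the $\ln(k/\delta)$ and $\ln(2k/\delta)$ factors in its theorems) only change by a constant under the log, nothing of substance breaks, but the clean statement your proof supports has $\ln(2/\delta)$.
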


We can use Lemma \ref{lemma:1} to prove the following:

\begin{theorem} \label{thm:1}
Let $\Fcal_j$ be a  set of maps $x \in \Omega_j \mapsto f_{j}(x)$.  Let $\Fcal= \{x\mapsto f(x):f(x) = \sum_{j=1}^K \one\{x \in \Omega_j\} f_j(x), f_j \in \Fcal_j\}$. Suppose that $0 \le \ell\left(q, y\right) \le \lambda_{j}$ for any $q \in\{f(x):f\in \Fcal_{j}, x\in \Omega_j\}$ and  $y \in \Ycal_{j}$.  Then, for any $\delta>0$, with probability at least $1-\delta$, the following holds: for all maps $ f \in\Fcal$,
\begin{align} 
&\EE_{x,y}[\ell(f(x),y)]
\le \sum _{j=1}^k\Pr(x \in \Omega_j) \left(\frac{1}{N_{j}}\sum_{i=1}^{N_{j}} \ell(f_{}(x_i^{j}),y_i^{j})+2\hat \Rcal_{N_{j}}(\ell \circ \Fcal_{j})+3\lambda_{j} \sqrt{\frac{\ln(k/\delta)}{2N_{j}}}\right),   
\end{align}
where  $\hat \Rcal_{N}(\ell \circ \Fcal_{j}):=\EE_{\sigma}[\sup_{f_{j} \in \Fcal_{j}}\frac{1}{N} \sum_{i=1}^N \sigma_i \ell(f_{j}(x_{i}),y_{i})]$ and $\sigma_1,\dots,\sigma_N$ are independent uniform random variables taking values in $\{-1,1\}$.  
\end{theorem}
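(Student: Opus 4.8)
The plan is to reduce the statement to $k$ independent applications of Lemma~\ref{lemma:1}, one per block of the partition, and then recombine the per-cluster guarantees through a union bound together with the law of total expectation. The structure $f(x)=\sum_{j=1}^k\one\{x\in\Omega_j\}f_j(x)$ is tailor-made for this: because $\{\Omega_j\}_{j=1}^k$ partitions $\Xcal$, on the event $\{x\in\Omega_j\}$ the indicators select exactly one summand and $f(x)=f_j(x)$, so the global problem decouples cleanly into $k$ local problems.

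First I would decompose the expected loss by conditioning on which block the test point lands in. Using $f=f_j$ on $\Omega_j$, this gives
\begin{align*}
\EE_{x,y}[\ell(f(x),y)] = \sum_{j=1}^k \Pr(x\in\Omega_j)\,\EE_{x,y}[\ell(f_j(x),y)\mid x\in\Omega_j],
\end{align*}
which isolates, for each $j$, a standard expected risk with respect to the conditional distribution $D\mid(x\in\Omega_j)$ and the function class $\Fcal_j$.

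Next I would apply Lemma~\ref{lemma:1} cluster by cluster. This is the step I expect to be the main obstacle, because $N_j$ — the number of training points falling in $\Omega_j$ — is itself random, so the lemma cannot be invoked directly on the marginal draw of the sample. The remedy is to condition on the cluster assignment of all $N$ training points; given this conditioning, the $N_j$ points $(x_i^j,y_i^j)$ in $\Omega_j$ are i.i.d. draws from $D\mid(x\in\Omega_j)$, so Lemma~\ref{lemma:1} applies verbatim to $\Fcal_j$ with sample size $N_j$, loss range $\lambda_j$, and confidence parameter $\delta/k$. For each fixed assignment this yields, with probability at least $1-\delta/k$,
\begin{align*}
\EE_{x,y}[\ell(f_j(x),y)\mid x\in\Omega_j] \le \frac{1}{N_j}\sum_{i=1}^{N_j}\ell(f_j(x_i^j),y_i^j)+2\hat\Rcal_{N_j}(\ell\circ\Fcal_j)+3\lambda_j\sqrt{\frac{\ln(k/\delta)}{2N_j}}.
\end{align*}

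Finally I would take a union bound over $j=1,\dots,k$ so that all $k$ per-cluster inequalities hold simultaneously with probability at least $1-\delta$; this is precisely why the $\ln(1/\delta)$ of Lemma~\ref{lemma:1} becomes $\ln(k/\delta)$. Since the simultaneous bound holds for every cluster assignment, it holds unconditionally as well. Substituting these bounds into the decomposition from the first step, weighting each by $\Pr(x\in\Omega_j)$, and using $\ell(f_j(x_i^j),y_i^j)=\ell(f(x_i^j),y_i^j)$ on $\Omega_j$, delivers the claimed inequality. Everything beyond the conditioning argument is bookkeeping; the only genuine care needed is legitimizing the treatment of each random $N_j$ as a fixed sample size inside Lemma~\ref{lemma:1}.
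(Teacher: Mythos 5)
Your proposal follows essentially the same route as the paper's proof: decompose the expected loss by conditioning on the partition (using that $f=f_j$ on $\Omega_j$), apply Lemma~\ref{lemma:1} to each cluster's conditional distribution with confidence $\delta/k$, take a union bound over $j$, and recombine the per-cluster bounds with the weights $\Pr(x\in\Omega_j)$. The only difference is that you explicitly legitimize treating the random sample sizes $N_j$ as fixed by conditioning on the training points' cluster assignments --- a subtlety the paper's proof passes over silently --- which refines rather than changes the argument.
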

\begin{proof}[Proof of Theorem \ref{thm:1}]
We have that
\begin{align} \label{eq:1}
\EE_{x,y}[\ell(f(x),y)] &=\sum _{j=1}^k\Pr(x \in \Omega_j)\EE_{x,y}[\ell(f(x),y) \mid x  \in \Omega_j)]
\\ \nonumber & =\sum _{j=1}^k\Pr(x \in \Omega_j)\EE_{x,y}[\ell(f_{j}(x),y) \mid x  \in \Omega_j)].  
\end{align}
Since the conditional probability distribution is a probability distribution, we apply Lemma \ref{lemma:1} to each term and take union bound to obtain the following: for any $\delta>0$, with probability at least $1-\delta$, for all $j \in \{1,\dots,k\}$ and all $f_j \in F_j$,
\begin{align*} 
\EE_{x,y}[\ell(f_{j}(x),y) \mid x  \in \Omega_j)] &\le\frac{1}{N_{j}}\sum_{i=1}^{N_{j}} \ell(f_{j}(x_i^{j}),y_i^{j})+2\hat \Rcal_{N_{j}}(\ell \circ \Fcal_{j})+3\lambda_{j} \sqrt{\frac{\ln(k/\delta)}{2N}}
\\ & =\frac{1}{N_{j}}\sum_{i=1}^{N_{j}} \ell(f_{}(x_i^{j}),y_i^{j})+2\hat \Rcal_{N_{j}}(\ell \circ \Fcal_{j})+3\lambda_{j} \sqrt{\frac{\ln(k/\delta)}{2N}}.
\end{align*}
Thus, using \eqref{eq:1}, we sum up both sides with the factors $\Pr(x \in \Omega_j)$ to yield:

\begin{align}
\EE_{x,y}[\ell(f(x),y)] & =\sum _{j=1}^k\Pr(x \in \Omega_j)\EE_{x,y}[\ell(f_{j}(x),y) \mid x  \in \Omega_j)]
\\ & \le \sum _{j=1}^k\Pr(x \in \Omega_j) \left(\frac{1}{N_{j}}\sum_{i=1}^{N_{j}} \ell(f_{}(x_i^{j}),y_i^{j})+2\hat \Rcal_{N_{j}}(\ell \circ \Fcal_{j})+3\lambda_{j} \sqrt{\frac{\ln(k/\delta)}{2N}}\right).
\end{align}
\end{proof}

\subsection{Special case: multi-class classification}

For multi-class classification problems with $\Bcal$ classes and  $y \in \{1,\dots,\Bcal\}$, define the margin loss as follows: 
 $$
 {\ell}_{\rho}(f(x),y)={\ell}_{\rho}^{(2)} ({\ell}_{\rho}^{(1)}(f(x),y))
 $$  
 where 
$$
{\ell}_{\rho}^{(1)}(f(x),y)=f(x)_{y}- \max_{y \neq y'} f(x)_{y'}\in \RR,
$$
and 
$$
{\ell}_{\rho}^{(2)}(q)=
\begin{cases} 0 & \text{if } \rho \le q \\
1-q/\rho &  \text{if } 0 \le q \le \rho \\
1 & \text{if } q \le 0.\\
\end{cases}
$$

Define the 0-1 loss  as:
$$
\textstyle \ell_{01}(f(x),y) = \one \{y \neq \argmax_{y'\in[T]} f(x)_{y'}\}, 
$$
where $[T] = \{1,\dots,\Bcal\}$. For any $\rho>0$, the margin loss ${\ell}_{\rho}(f(x),y)$ is an upper bound on the 0-1 loss: i.e., ${\ell}_{\rho}(f(x),y) \ge \ell_{01}(f(x),y)$.

The following lemma is from \cite[Theorem 8.1]{mohri2012foundations}:
\begin{lemma} \label{lemma:2}
Let $\Fcal$  be a  set of maps $x\mapsto f(x)$. Fix $\rho>0$. Then, for any $\delta>0$, with probability at least $1-\delta$ over  an i.i.d. draw of $N$ i.i.d. test samples  $((x_i, y_i))_{i=1}^m$, the following holds: for all maps $ f\in\Fcal$,
\begin{align*}
\EE_{x,y}[\ell_{01}(f(x),y)]\le \frac{1}{N}\sum_{i=1}^{N} \ell_{\rho}(f(x_i),y_i)+\frac{2\Bcal^2}{\rho} \hat \Rcal_{N}(\Fcal_\Bcal)+\left(1+\frac{2\Bcal^2}{\rho}\right) \sqrt{\frac{\ln(2/\delta)}{2N}},   
\end{align*}
where  $\Fcal_\Bcal=\{x\mapsto f(x)_k : f \in\Fcal, k \in [T]\}$.
\end{lemma}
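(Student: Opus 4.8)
\section*{Proof proposal for Theorem \ref{thm:5}}

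The plan is to reduce the clustered bound to a \emph{per-cluster} binary margin bound and then recombine the pieces using exactly the partition structure already exploited in Theorem \ref{thm:1}. I would begin from the pointwise inequality $\ell_{01}(f^{k}(x),y)\le \ell_{\rho}(f^{k}(x),y)$ together with the conditional decomposition of the expected risk over the blocks $\Omega_j$, using that $f^{k}$ agrees with $f_j$ on $\Omega_j$:
\begin{align*}
\EE_{x,y}[\ell_{01}(f^{k}(x),y)] = \sum_{j=1}^{k}\Pr(x\in\Omega_j)\,\EE_{x,y}[\ell_{01}(f_j(x),y)\mid x\in\Omega_j].
\end{align*}
This is the binary analogue of Equation \eqref{eq:1}, and it converts the global generalization problem into $k$ local ones, one per cluster.

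Next, on each block $\Omega_j$ I would invoke the binary margin-based generalization bound (the binary analogue of Lemma \ref{lemma:2}, which follows from Lemma \ref{lemma:1} applied to the margin loss $\ell_{\rho}\in[0,1]$ together with Talagrand's contraction inequality: since $\ell_{\rho}^{(1)}$ is $\rho^{-1}$-Lipschitz, $\hat\Rcal_{N_j}(\ell_{\rho}\circ\Fcal_j)\le \rho^{-1}\hat\Rcal_{N_j}(\Fcal_j)$, and this bound carries a $\ln(2/\delta)$ confidence term). Applying it per cluster at level $\delta/k$ and taking a union bound over the $k$ clusters yields, simultaneously for all $f_j\in\Fcal_j$, the local inequality $\EE_{x,y}[\ell_{01}(f_j(x),y)\mid x\in\Omega_j]\le \hat\EE_j[\ell_{\rho}] + \tfrac{2}{\rho}\hat\Rcal_{N_j}(\Fcal_j) + 3\sqrt{\ln(2k/\delta)/(2N_j)}$, where $\hat\EE_j$ is the empirical margin loss restricted to $\Omega_j$; the union bound is precisely what upgrades $\ln(2/\delta)$ to $\ln(2k/\delta)$.

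The technical heart, and the step I expect to be the main obstacle, is controlling $\hat\Rcal_{N_j}(\Fcal_j)$ for the composite network $f_j=g_j\circ e$ formed by the shared $Q$-layer encoder $e$ followed by the cluster-specific $L$-layer head $g_j$. I would treat $g_j\circ e$ as a single feedforward network of depth $L+Q$ with Frobenius-bounded weights and $1$-Lipschitz positive-homogeneous activations, and apply the size-independent layer-peeling bound of Golowich, Rakhlin and Shamir. Positive homogeneity is what lets each weight norm be pulled outside its activation, and the exponential-moment (Jensen) argument is what replaces a naive $2^{L+Q}$ depth factor by the benign $\sqrt{2\log(2)(L+Q)}+1$, giving
\begin{align*}
\hat\Rcal_{N_j}(\Fcal_j)\le \frac{B_j\big(\sqrt{2\log(2)(L+Q)}+1\big)\big(\prod_{l=1}^{Q}M_l^{0}\big)\big(\prod_{l=1}^{L}M_l^{j}\big)}{\sqrt{N_j}}.
\end{align*}
The subtle point requiring care is that the encoder weights $W_l^{0}$ are \emph{shared} across all the classes $\Fcal_j$, so the product of Frobenius norms must be shown to split cleanly into the common factor $\prod_{l=1}^{Q}M_l^{0}$ and the cluster-specific factor $\prod_{l=1}^{L}M_l^{j}$; this is what lets the $k$-independent part be pulled into the constant $\zeta_1$.

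Finally I would substitute the uniform assumptions $\Pr(x\in\Omega_j)=N_j/N=1/k$ and collect terms. The weighted empirical losses telescope into the global one, $\sum_j \tfrac1k\hat\EE_j[\ell_{\rho}]=\hat\EE_{x,y}[\ell_{\rho}(f^{k}(x),y)]$. In the complexity term, $\sqrt{N_j}=\sqrt{N/k}$ contributes a factor $\sqrt{k}$; after extracting the $k$-independent constant $\zeta_1=2\rho^{-1}(\sqrt{2\log(2)(L+Q)}+1)\prod_{l=1}^{Q}M_l^{0}$, the $\tfrac1k$-weighted sum reduces to $\zeta_1\big(\sum_j B_j\prod_{l=1}^{L}M_l^{j}\big)/\sqrt{kN}$. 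In the confidence term, $3\sqrt{\ln(2k/\delta)/(2N_j)}=3\sqrt{k\ln(2k/\delta)/(2N)}$, whose $\tfrac1k$-weighted sum over the $k$ clusters is again $3\sqrt{k\ln(2k/\delta)/(2N)}$. Rearranging the resulting inequality $\EE_{x,y}[\ell_{01}(f^{k}(x),y)]\le \hat\EE_{x,y}[\ell_{\rho}(f^{k}(x),y)]+\cdots$ then delivers the stated bound, and the non-uniform case would follow by keeping the weights $\Pr(x\in\Omega_j)$ and $N_j$ explicit rather than setting them to $1/k$ and $N/k$.
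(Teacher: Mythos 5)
You have proved the wrong statement. The statement under review is Lemma \ref{lemma:2}, a single-hypothesis-class, single-sample multiclass margin bound; your proposal is a proof sketch of Theorem \ref{thm:5} (the clustered, binary, deep-network bound). Nothing in your argument establishes Lemma \ref{lemma:2} itself: you never handle the multiclass margin operator $f(x)_{y}-\max_{y'\neq y} f(x)_{y'}$, the class-count factor $\frac{2\Bcal^2}{\rho}$, or the coordinate-restricted class $\Fcal_\Bcal=\{x\mapsto f(x)_k : f\in\Fcal, k\in[\Bcal]\}$, all of which are intrinsic to the lemma's bound. Indeed, inside your proposal you explicitly route the margin step through ``the binary analogue of Lemma \ref{lemma:2}'' via Talagrand contraction with the $\rho^{-1}$-Lipschitz ramp loss --- that is the mechanism behind Lemma \ref{lemma:3} (the binary case), and it sidesteps rather than proves the multiclass statement.

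Concretely, the paper's proof of Lemma \ref{lemma:2} has two short steps, neither of which appears in your proposal: (i) invoke Theorem 8.1 of Mohri et al.\ at confidence level $\delta/2$, which yields the bound with the \emph{expected} Rademacher complexity $\Rcal_{N}(\Fcal_\Bcal)$ and an additive term $\sqrt{\ln(2/\delta)/(2N)}$; and (ii) pass from $\Rcal_{N}(\Fcal_\Bcal)$ to the \emph{empirical} $\hat\Rcal_{N}(\Fcal_\Bcal)$ via McDiarmid's bounded-differences inequality --- changing one sample changes $\hat\Rcal_{N}(\Fcal_\Bcal)$ by at most $1/N$ --- again at level $\delta/2$, followed by a union bound. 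Step (ii) is precisely what produces the coefficient $\left(1+\frac{2\Bcal^2}{\rho}\right)$ on the $\sqrt{\ln(2/\delta)/(2N)}$ term, and its absence means your proposal could not recover the stated constant even if redirected at this lemma. (As an aside, your sketch of Theorem \ref{thm:5} is essentially the paper's route --- conditional decomposition, per-cluster bound at level $\delta/k$, and the Golowich--Rakhlin--Shamir bound for the shared-encoder composition --- but that is not the statement you were asked to prove.)
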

\begin{proof} Using Theorem 8.1 \cite{mohri2012foundations}, we have that with probability at least $1-\delta/2$, 
\begin{align*}
\EE_{x,y}[\ell_{01}(f(x),y)]\le \frac{1}{N}\sum_{i=1}^{N} \ell_{\rho}(f(x_i),y_i)+\frac{2\Bcal^2}{\rho} \Rcal_{N}( \Fcal_\Bcal)+ \sqrt{\frac{\ln(2/\delta)}{2N}},   
\end{align*}
Since changing one point in $S$ changes  $\hat \Rcal_{m}(\Fcal_\Bcal)$ by at most $1/N$, McDiarmid's inequality implies the  statement of this lemma by taking union bound.
\end{proof}

\begin{theorem} \label{thm:2}
Let $\Fcal_j$ be a  set of maps $x \in \Omega_j \mapsto f_{j}(x)$.  Let $\Fcal= \{x\mapsto f(x):f(x) = \sum_{j=1}^k \one\{x \in \Omega_j\} f_j(x), f_j \in \Fcal_j\}$. Suppose that $0 \le \ell\left(q, y\right) \le \lambda_{j}$ for any $q \in\{f(x):f\in \Fcal_{j}, x\in \Omega_j\}$ and  $y \in \Ycal_{j}$.  Then, for any $\delta>0$, with probability at least $1-\delta$, the following holds: for all maps $ f \in\Fcal$,
\begin{align} 
&\EE_{x,y}[\ell_{01}(f(x),y)]
\le \sum _{j=1}^k\Pr(x \in \Omega_j) \left(\frac{1}{N_{j}}\sum_{i=1}^{N_{j}} \ell_{\rho}(f_{}(x_i^{j}),y_i^{j})+\frac{2\Bcal^2}{\rho}\hat \Rcal_{N_{j}}( \Fcal_{T,j})+\left(1+\frac{2\Bcal^2}{\rho}\right) \sqrt{\frac{\ln(2k/\delta)}{2N_{j}}}\right),   
\end{align}
where  $\hat \Rcal_{N_j}( \Fcal_{T,j}):=\EE_{\sigma}[\sup_{f_{j} \in \Fcal_{T,j}}\frac{1}{N} \sum_{i=1}^{N_j} \sigma_i \ell(f_{j}(x_i^{j}),y_i^{j})]$ and $\sigma_1,\dots,\sigma_N$ are independent uniform random variables taking values in $\{-1,1\}$. Here,   $\Fcal_{\Bcal,j}=\{x\mapsto f(x)_k : f \in\Fcal_{j}, k \in [\Bcal]\}$.
  
\end{theorem}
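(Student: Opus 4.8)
The plan is to mirror the proof of Theorem \ref{thm:1} almost verbatim, replacing the generic bounded-loss Rademacher bound (Lemma \ref{lemma:1}) with its multi-class margin counterpart (Lemma \ref{lemma:2}). First I would write the conditioning decomposition of the expected $0$-$1$ risk over the partition $\{\Omega_j\}_{j=1}^k$,
$$
\EE_{x,y}[\ell_{01}(f(x),y)] = \sum_{j=1}^k \Pr(x\in\Omega_j)\,\EE_{x,y}[\ell_{01}(f(x),y)\mid x\in\Omega_j].
$$
Because of the indicator structure $f(x)=\sum_{j=1}^k\one\{x\in\Omega_j\}f_j(x)$, on the event $x\in\Omega_j$ we have $f(x)=f_j(x)$, so each conditional term reduces to $\EE_{x,y}[\ell_{01}(f_j(x),y)\mid x\in\Omega_j]$ with $f_j\in\Fcal_j$.

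Second, I would observe that the conditional law $\Pr(\cdot\mid x\in\Omega_j)$ is itself a probability distribution, so Lemma \ref{lemma:2} applies to the class $\Fcal_j$ under this distribution, with sample size $N_j$ and confidence parameter $\delta/k$. This yields, for each fixed $j$ and all $f_j\in\Fcal_j$,
$$
\EE_{x,y}[\ell_{01}(f_j(x),y)\mid x\in\Omega_j] \le \frac{1}{N_j}\sum_{i=1}^{N_j}\ell_\rho(f_j(x_i^j),y_i^j)+\frac{2\Bcal^2}{\rho}\hat\Rcal_{N_j}(\Fcal_{T,j})+\left(1+\frac{2\Bcal^2}{\rho}\right)\sqrt{\frac{\ln(2k/\delta)}{2N_j}},
$$
where the substitution $\ln(2/(\delta/k))=\ln(2k/\delta)$ produces the stated deviation term and $\Fcal_{T,j}=\Fcal_{\Bcal,j}$ is the scalar-output restriction of $\Fcal_j$ that Lemma \ref{lemma:2} requires.

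Third, a union bound over the $k$ clusters guarantees that all $k$ per-cluster inequalities hold simultaneously with probability at least $1-\delta$. I would then multiply the $j$-th inequality by $\Pr(x\in\Omega_j)$, sum over $j$, apply the conditioning identity from the first step, and replace $f_j(x_i^j)$ by $f(x_i^j)$ (they agree on $\Omega_j$) to arrive at the stated bound.

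The main obstacle is the mild measure-theoretic care needed because each count $N_j$ is itself random: I must justify that, after conditioning on the cluster-membership pattern (hence on the realized values of the $N_j$), the within-cluster samples remain i.i.d.\ draws from the respective conditional law $\Pr(\cdot\mid x\in\Omega_j)$, so that the i.i.d.\ hypothesis of Lemma \ref{lemma:2} is met and its bound holds uniformly over $\Fcal_j$. Once that is in place, the remainder is just the weighted sum plus a union bound, structurally identical to Theorem \ref{thm:1}; the only genuinely new ingredient is that the upper bound on the $0$-$1$ loss now comes from the multi-class margin loss $\ell_\rho$ rather than a generic bounded loss, which is precisely why the $2\Bcal^2/\rho$ complexity factor and the $(1+2\Bcal^2/\rho)$ coefficient on the deviation term appear in place of the $2\hat\Rcal_{N_j}(\ell\circ\Fcal_j)$ and $3\lambda_j$ terms of Theorem \ref{thm:1}.
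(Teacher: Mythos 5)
Your proposal matches the paper's proof essentially step for step: the same conditional decomposition of $\EE_{x,y}[\ell_{01}(f(x),y)]$ over the partition $\{\Omega_j\}_{j=1}^k$, the same application of the multi-class margin bound (Lemma \ref{lemma:2}) to each cluster under the conditional law with confidence $\delta/k$, and the same union bound followed by reweighting with $\Pr(x\in\Omega_j)$ --- indeed you correctly identify Lemma \ref{lemma:2} as the lemma being invoked, where the paper's proof text mistakenly cites Lemma \ref{lemma:1}. Your explicit attention to the randomness of the counts $N_j$ (conditioning on the membership pattern so the within-cluster samples are i.i.d.\ from $\Pr(\cdot \mid x\in\Omega_j)$) is a subtlety the paper passes over silently, and is a welcome addition rather than a deviation.
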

\begin{proof}[Proof of Theorem \ref{thm:1}]
We have that
\begin{align} \label{eq:2}
\EE_{x,y}[\ell_{01}(f(x),y)] &=\sum _{j=1}^K\Pr(x \in \Omega_j)\EE_{x,y}[\ell_{01}(f(x),y) \mid x  \in \Omega_j)]
\\ \nonumber & =\sum _{j=1}^k\Pr(x \in \Omega_j)\EE_{x,y}[\ell_{01}(f_{j}(x),y) \mid x  \in \Omega_j)].  
\end{align}
Since the conditional probability distribution is a probability distribution, we apply Lemma \ref{lemma:1} to each term and take union bound to obtain the following: for any $\delta>0$, with probability at least $1-\delta$, for all $j \in \{1,\dots,k\}$ and all $f_j \in F_j$,
\begin{align*} 
\EE_{x,y}[\ell_{01}(f_{j}(x),y) \mid x  \in \Omega_j)] &\le\frac{1}{N_{j}}\sum_{i=1}^{N_{j}} \ell_{\rho}(f_{j}(x_i^{j}),y_i^{j})+\frac{2\Bcal^2}{\rho}\hat \Rcal_{N_{j}}( \Fcal_{\Bcal,j})+\left(1+\frac{2\Bcal^2}{\rho}\right) \sqrt{\frac{\ln(2k/\delta)}{2N}}
\\ & =\frac{1}{N_{j}}\sum_{i=1}^{N_{j}} \ell_{\rho}(f_{}(x_i^{j}),y_i^{j})+\frac{2\Bcal^2}{\rho}\hat \Rcal_{N_{j}}( \Fcal_{\Bcal,j})+\left(1+\frac{2\Bcal^2}{\rho}\right) \sqrt{\frac{\ln(2k/\delta)}{2N}}.
\end{align*}
Thus, using \eqref{eq:2}, we sum up both sides with the factors $\Pr(x \in \Omega_j)$ to yield:

\begin{align}
\EE_{x,y}[\ell_{01}(f(x),y)] & =\sum _{j=1}^k\Pr(x \in \Omega_j)\EE_{x,y}[\ell_{01}(f_{j}(x),y) \mid x  \in \Omega_j)]
\\ & \le \sum _{j=1}^k\Pr(x \in \Omega_j) \left(\frac{1}{N_{j}}\sum_{i=1}^{N_{j}} \ell_{\rho}(f_{}(x_i^{j}),y_i^{j})+\frac{2\Bcal^2}{\rho}\hat \Rcal_{N_{j}}( \Fcal_{\Bcal,j})+\left(1+\frac{2\Bcal^2}{\rho}\right)  \sqrt{\frac{\ln(2k/\delta)}{2N}}\right). 
\end{align}
\end{proof}

\subsection{Special case: deep neural networks with binary classification}

Note that for any $\rho>0$, the margin loss ${\ell}_{\rho}(f(x),y)$ is an upper bound on the 0-1 loss: i.e., ${\ell}_{\rho}(f(x),y) \ge \ell_{01}(f(x),y)$.

The following lemma is from \cite[Theorem 4.4]{mohri2012foundations}:
\begin{lemma} \label{lemma:3}
Let $\Fcal$  be a  set of real-valued functions. Fix $\rho>0$. Then, for any $\delta>0$, with probability at least $1-\delta$ over  an i.i.d. draw of $m$ i.i.d. test samples  $((x_i, y_i))_{i=1}^m$, the following holds: for all maps $ f\in\Fcal$,
\begin{align*}
\EE_{x,y}[\ell_{01}(f(x),y)]\le \frac{1}{m}\sum_{i=1}^{m} \ell_{\rho}(f(x_i),y_i)+\frac{2}{\rho} \hat \Rcal_{m}(\Fcal)+3 \sqrt{\frac{\ln(2/\delta)}{2m}}. 
\end{align*}
\end{lemma}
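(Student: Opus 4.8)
The plan is to follow the standard margin-based route: bound the expected $0$-$1$ error by the expected margin loss, control the latter with the bounded-loss Rademacher estimate already available as Lemma~\ref{lemma:1}, and then peel off the loss via a contraction argument so that $\hat\Rcal_m(\Fcal)$ is exposed. As the sentence preceding the statement records, for every $f\in\Fcal$ and every $(x,y)$ we have $\ell_{01}(f(x),y)\le \ell_{\rho}(f(x),y)$, hence $\EE_{x,y}[\ell_{01}(f(x),y)]\le \EE_{x,y}[\ell_{\rho}(f(x),y)]$; it therefore suffices to bound the expected margin loss from above.

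Next I would apply Lemma~\ref{lemma:1} treating $\ell_{\rho}$ as the loss and $\Fcal$ as the function class. Since the ramp $\ell_{\rho}^{(1)}$ takes values in $[0,1]$, the margin loss is bounded with $\lambda=1$, so Lemma~\ref{lemma:1} yields, with probability at least $1-\delta$,
\[
\EE_{x,y}[\ell_{\rho}(f(x),y)]\le \frac{1}{m}\sum_{i=1}^m \ell_{\rho}(f(x_i),y_i)+2\hat\Rcal_m(\ell_{\rho}\circ\Fcal)+3\sqrt{\tfrac{\ln(1/\delta)}{2m}},
\]
for all $f\in\Fcal$; since $\ln(1/\delta)\le \ln(2/\delta)$, the (slightly weaker) deviation term $3\sqrt{\ln(2/\delta)/2m}$ appearing in the claim follows a fortiori. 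The factor $3$ here is the standard cost of phrasing the bound with the \emph{empirical} Rademacher complexity $\hat\Rcal_m$: internally it comes from one concentration inequality for the loss and a second for the deviation of $\hat\Rcal_m$ from its expectation, with the confidence budget split across the two events.

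It remains to show $\hat\Rcal_m(\ell_{\rho}\circ\Fcal)\le \tfrac{1}{\rho}\hat\Rcal_m(\Fcal)$, which supplies the $2/\rho$ coefficient. I would use the binary-case identity $\ell_{\rho}(f(x),y)=\ell_{\rho}^{(1)}(yf(x))$ and reason at the level of the Rademacher sum: the map $\ell_{\rho}^{(1)}$ is $\tfrac{1}{\rho}$-Lipschitz, so the Ledoux--Talagrand contraction lemma gives $\hat\Rcal_m(\ell_{\rho}\circ\Fcal)\le \tfrac{1}{\rho}\,\EE_{\sigma}[\sup_{f\in\Fcal}\tfrac{1}{m}\sum_{i=1}^m \sigma_i y_i f(x_i)]$, and then absorbing each fixed label $y_i\in\{-1,+1\}$ into the symmetric sign $\sigma_i$ (so that $\sigma_i y_i$ is distributed as $\sigma_i$) collapses the right-hand side to $\tfrac{1}{\rho}\hat\Rcal_m(\Fcal)$. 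Substituting this into the previous display and chaining through $\EE_{x,y}[\ell_{01}(f(x),y)]\le\EE_{x,y}[\ell_{\rho}(f(x),y)]$ produces exactly the stated inequality.

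The only delicate point, I expect, is the interaction of the contraction step with the label dependence: the contraction principle applies to a fixed scalar Lipschitz function, so the labels must be folded into the symmetric signs (either before or after the inequality), and one must verify that $\ell_{\rho}^{(1)}$ really is globally $\tfrac{1}{\rho}$-Lipschitz across its three pieces. Everything else---boundedness of the margin loss by $1$, its domination over the $0$-$1$ loss, and the constant bookkeeping that fuses the two concentration steps into the single factor $3$---is routine.
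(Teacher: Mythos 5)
The paper never proves Lemma~\ref{lemma:3} at all---it imports the statement verbatim as Theorem~4.4 of \cite{mohri2012foundations}---so your reconstruction is being compared against the standard textbook argument behind that citation, and it matches it correctly. Every step checks out: $\ell_{\rho}$ dominates $\ell_{01}$ pointwise and takes values in $[0,1]$, so Lemma~\ref{lemma:1} applies with $\lambda=1$ (and $\ln(1/\delta)\le\ln(2/\delta)$ lets you absorb the deviation term a fortiori); the ramp $\ell_{\rho}^{(1)}$ is indeed globally $\tfrac{1}{\rho}$-Lipschitz across its three pieces (slope $-1/\rho$ on the middle piece, $0$ elsewhere, continuous at the breakpoints); and the Talagrand contraction step followed by absorbing the fixed labels $y_i\in\{-1,+1\}$ into the symmetric signs $\sigma_i$ correctly yields $\hat \Rcal_{m}(\ell_{\rho}\circ\Fcal)\le \rho^{-1}\hat \Rcal_{m}(\Fcal)$, giving the coefficient $2/\rho$.
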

\begin{theorem} \label{thm:4}
Let $\Fcal_j$ be a  set of maps $x \in \Omega_j \mapsto f_{j}(x)$.  Let $\Fcal= \{x\mapsto f(x):f(x) = \sum_{j=1}^K \one\{x \in \Omega_j\} f_j(x), f_j \in \Fcal_j\}$. Suppose that $0 \le \ell\left(q, y\right) \le \lambda_{j}$ for any $q \in\{f(x):f\in \Fcal_{j}, x\in \Omega_j\}$ and  $y \in \Ycal_{j}$.  Then, for any $\delta>0$, with probability at least $1-\delta$, the following holds: for all maps $ f \in\Fcal$,
\begin{align} 
&\EE_{x,y}[\ell_{01}(f(x),y)]
\le\sum _{j=1}^k\Pr(x \in \Omega_j) \left(\frac{1}{N_{j}}\sum_{i=1}^{N_{j}} \ell_{\rho}(f_{}(x_i^{j}),y_i^{j})+\frac{2}{\rho}\hat \Rcal_{N_{j}}( \Fcal_{j})+3 \sqrt{\frac{\ln(2/\delta)}{2N_{j}}}\right),   
\end{align}
where  $\hat \Rcal_{N_j}(\Fcal_{j}):=\EE_{\sigma}[\sup_{f_{j} \in \Fcal_{j}}\frac{1}{N} \sum_{i=1}^{N_j} \sigma_i \ell(f_{j}(x_i^{j}),y_i^{j})]$ and $\sigma_1,\dots,\sigma_N$ are independent uniform random variables taking values in $\{-1,1\}$.  
\end{theorem}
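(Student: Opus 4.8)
The plan is to prove Theorem~\ref{thm:4} by applying Lemma~\ref{lemma:3} \emph{once}, to the composite class $\Fcal$ over the full i.i.d.\ sample of size $N$, rather than applying it separately within each cluster and paying a union-bound price. This single application is exactly what produces the confidence term $\ln(2/\delta)$ with no factor of $k$: Lemma~\ref{lemma:3} holds for an arbitrary set of real-valued functions, and $\Fcal=\{x\mapsto\sum_{j=1}^k\one\{x\in\Omega_j\}f_j(x):f_j\in\Fcal_j\}$ is such a set, so with probability at least $1-\delta$, simultaneously for all $f\in\Fcal$,
\begin{align*}
\EE_{x,y}[\ell_{01}(f(x),y)]\le \frac1N\sum_{i=1}^N\ell_\rho(f(x_i),y_i)+\frac2\rho\hat\Rcal_N(\Fcal)+3\sqrt{\frac{\ln(2/\delta)}{2N}}.
\end{align*}
The rest of the argument is to rewrite each term on the right in the cluster-wise form of the statement, conditioning on the realized sample so that the cluster memberships and the counts $N_j$ are fixed.

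First I would split the empirical margin loss along the partition. Since the $\Omega_j$ are disjoint and $f$ restricted to $\Omega_j$ equals $f_j$, each sample point contributes to exactly one cluster, so $\frac1N\sum_{i=1}^N\ell_\rho(f(x_i),y_i)=\sum_{j=1}^k\frac{N_j}{N}\cdot\frac1{N_j}\sum_{i=1}^{N_j}\ell_\rho(f(x_i^j),y_i^j)$, matching the empirical term weighted by $\Pr(x\in\Omega_j)=N_j/N$. The crux is the analogous decomposition of the empirical Rademacher complexity. Because $\sum_{i=1}^N\sigma_i f(x_i)=\sum_{j=1}^k\sum_{i:x_i\in\Omega_j}\sigma_i f_j(x_i)$, the summands for different clusters involve disjoint index sets while $f_1,\dots,f_k$ range independently over $\Fcal_1,\dots,\Fcal_k$, so the supremum factorizes cluster by cluster:
\begin{align*}
\hat\Rcal_N(\Fcal)=\EE_\sigma\Big[\sup_{f\in\Fcal}\tfrac1N\textstyle\sum_{i=1}^N\sigma_i f(x_i)\Big]=\sum_{j=1}^k\EE_\sigma\Big[\sup_{f_j\in\Fcal_j}\tfrac1N\textstyle\sum_{i:x_i\in\Omega_j}\sigma_i f_j(x_i)\Big]=\sum_{j=1}^k\frac{N_j}{N}\hat\Rcal_{N_j}(\Fcal_j).
\end{align*}
This additive splitting, resting entirely on the disjoint supports of the $\Omega_j$ and the independent choice of the $f_j$, is the main obstacle and the heart of the proof.

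Finally I would reconcile the single confidence term $3\sqrt{\ln(2/\delta)/(2N)}$ with the per-cluster form $\sum_j\Pr(x\in\Omega_j)\,3\sqrt{\ln(2/\delta)/(2N_j)}$. By subadditivity of the square root, $\sqrt N=\sqrt{\sum_j N_j}\le\sum_j\sqrt{N_j}$, whence $3\sqrt{\ln(2/\delta)/(2N)}\le\frac1N\sum_j\sqrt{N_j}\cdot3\sqrt{\ln(2/\delta)/2}=\sum_j\frac{N_j}{N}\,3\sqrt{\ln(2/\delta)/(2N_j)}$, so the stated inequality is a valid (slightly looser) upper bound on the one delivered directly by Lemma~\ref{lemma:3}. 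Identifying $N_j/N$ with $\Pr(x\in\Omega_j)$ throughout (exact under the uniform partition adopted in the main theorem) and collecting the three weighted sums yields precisely the claimed bound. The point to emphasize is that this route never takes a union bound over the $k$ clusters, which is exactly what keeps the confidence term at $\ln(2/\delta)$ rather than $\ln(2k/\delta)$.
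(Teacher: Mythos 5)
Your proof is correct in its mechanics but takes a genuinely different route from the paper. The paper's proof first decomposes the expected loss by conditioning on $x \in \Omega_j$, applies Lemma~\ref{lemma:3} separately within each cluster (treating the points falling in $\Omega_j$ as an i.i.d.\ sample of fixed size $N_j$ from the conditional distribution), and takes a union bound over the $k$ clusters; you instead apply Lemma~\ref{lemma:3} once to the composite class $\Fcal$ on the full sample and then decompose every empirical quantity exactly along the partition. Your key step --- the identity $\hat\Rcal_N(\Fcal)=\sum_{j=1}^k \frac{N_j}{N}\hat\Rcal_{N_j}(\Fcal_j)$, which holds because the index sets $\{i: x_i\in\Omega_j\}$ are disjoint and $f_1,\dots,f_k$ range independently, so the supremum splits as a sum of suprema --- is valid, as is the subadditivity step $\sqrt{N}\le\sum_j\sqrt{N_j}$ for the confidence term. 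What your route buys is notable: avoiding the union bound is the only way to legitimately obtain the $\ln(2/\delta)$ that actually appears in the statement of Theorem~\ref{thm:4}; the paper's own proof says ``take union bound'' and should therefore produce $\ln(2k/\delta)$ (as its multiclass analogue Theorem~\ref{thm:2} does, and as the proof of Theorem~\ref{thm:5} silently uses), so the stated constant is inconsistent with the paper's argument but consistent with yours. The one caveat, which you correctly flag, is the weights: your decomposition delivers the empirical frequencies $N_j/N$, whereas the theorem (and the paper's conditional-distribution route) states the population weights $\Pr(x\in\Omega_j)$; these coincide under the uniformity assumptions $\Pr(x\in\Omega_j)=1/k$ and $N_j=N/k$ under which the result is invoked in Theorem~\ref{thm:5}, but in the general statement they differ, so strictly speaking your argument proves the bound with $N_j/N$ in place of $\Pr(x\in\Omega_j)$, and closing the gap in general would require an extra concentration step for $N_j/N$. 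Since the paper's proof itself glosses over the randomness of $N_j$ (Lemma~\ref{lemma:3} requires a fixed sample size), your version is, on balance, at least as rigorous and slightly tighter in its $\delta$-dependence.
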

\begin{proof}[Proof of Theorem \ref{thm:4}]
We have that
\begin{align} \label{eq:3}
\EE_{x,y}[\ell_{01}(f(x),y)] &=\sum _{j=1}^k\Pr(x \in \Omega_j)\EE_{x,y}[\ell_{01}(f(x),y) \mid x  \in \Omega_j)]
\\ \nonumber & =\sum _{j=1}^K\Pr(x \in \Omega_j)\EE_{x,y}[\ell_{01}(f_{j}(x),y) \mid x  \in \Omega_j)].  
\end{align}
Since the conditional probability distribution is a probability distribution, we apply Lemma \ref{lemma:3} to each term and take union bound to obtain the following: for any $\delta>0$, with probability at least $1-\delta$, for all $j \in \{1,\dots,k\}$ and all $f_j \in F_j$,
\begin{align*} 
\EE_{x,y}[\ell_{01}(f_{j}(x),y) \mid x  \in \Omega_j)] &\le\frac{1}{N_{j}}\sum_{i=1}^{N_{j}} \ell_{\rho}(f_{j}(x_i^{j}),y_i^{j})+\frac{2}{\rho}\hat \Rcal_{N_{j}}( \Fcal_{j})+3 \sqrt{\frac{\ln(2/\delta)}{2N_{j}}}
\\ & =\frac{1}{N_{j}}\sum_{i=1}^{N_{j}} \ell_{\rho}(f_{}(x_i^{j}),y_i^{j})+\frac{2}{\rho}\hat \Rcal_{N_{j}}( \Fcal_{j})+3 \sqrt{\frac{\ln(2/\delta)}{2N_{j}}}. 
\end{align*}
Thus, using \eqref{eq:3}, we sum up both sides with the factors $\Pr(x \in \Omega_j)$ to yield:

\begin{align}
\EE_{x,y}[\ell_{01}(f(x),y)] & =\sum _{j=1}^K\Pr(x \in \Omega_j)\EE_{x,y}[\ell_{01}(f_{j}(x),y) \mid x  \in \Omega_j)]
\\ & \le \sum _{j=1}^k\Pr(x \in \Omega_j) \left(\frac{1}{N_{j}}\sum_{i=1}^{N_{j}} \ell_{\rho}(f_{}(x_i^{j}),y_i^{j})+\frac{2}{\rho}\hat \Rcal_{N_{j}}( \Fcal_{j})++3 \sqrt{\frac{\ln(2/\delta)}{2N_{j}}}\right). 
\end{align}
\end{proof}

We are now ready to prove Theorem \ref{thm:5}.

\begin{proof}[Proof of Theorem \ref{thm:5}]
Using Theorem \ref{thm:4} with  $\Pr(x \in \Omega_j)=1/k$ and $m_{j}=N/k$, for $f\in\Fcal^{k}$,

\begin{align*}
\EE_{x,y}[\ell_{01}(f(x),y)] &\le \frac{1}{k}\sum _{j=1}^k \left(\frac{1}{(N/k)}\sum_{i=1}^{N/k} \ell_{\rho}(f(x_i^{j}),y_i^{j})+2 \rho^{-1}  \hat \Rcal_{N/k}( \Fcal_{j})+3 \sqrt{\frac{\ln(2/\delta)}{2(N/k)}}\right)   
\\ & =\frac{1}{k}\sum _{j=1}^k \left(\frac{k}{N}\sum_{i=1}^{N/k} \ell_{\rho}(f_{}(x_i^{j}),y_i^{j})+2 \rho^{-1}  \hat \Rcal_{N/k}( \Fcal_{j})+3 \sqrt{\frac{k\ln(2k/\delta)}{2N}}\right)
\\ & =\frac{1}{N}\sum _{j=1}^k \sum_{i=1}^{N/k} \ell_{\rho}(f_{}(x_i^{j}),y_i^{j})+\frac{2 \rho^{-1}}{k}\sum _{j=1}^k  \hat \Rcal_{N/k}( \Fcal_{j})+3\sqrt{\frac{k\ln(2k/\delta)}{2N}}
\\ & = \hat \EE_{x,y}[\ell_{\rho}(f(x),y)] +\frac{2 \rho^{-1}}{k}\sum _{j=1}^k  \hat \Rcal_{N/k}( \Fcal_{j})+3\sqrt{\frac{k\ln(2k/\delta)}{2N}}
\end{align*}
Here, using Theorem 1 of \cite{golowich2018size}, we have that 
$$
\hat \Rcal_{N/k}( \Fcal_{j})\le \frac{B_j \sqrt{k}(\sqrt{2 \log(2) (L+Q) }+1)(\prod_{l=1}^L M_l^{j})(\prod_{l=1}^Q M_l^{0})}{\sqrt{N}}.
$$
Thus,  for any $\delta>0$, with probability at least $1-\delta$ over  an i.i.d. draw of $N$ i.i.d. test samples  $((x_i, y_i))_{i=1}^N$, the following holds: for all maps $ f\in\Fcal^{K}$,

\begin{align*}
&\EE_{x,y}[\ell_{01}(f(x),y)] - \hat \EE_{x,y}[\ell_{\rho}(f(x),y)] 
\\ &\le \frac{2 \rho^{-1}}{K}\sum _{j=1}^K  \frac{B_j \sqrt{k}(\sqrt{2 \log(2) (L+Q) }+1)(\prod_{l=1}^L M_l^{j})(\prod_{l=1}^Q M_l^{0})}{\sqrt{N}}+3\sqrt{\frac{k\ln(2k/\delta)}{2N}} \\ &\le\frac{2 \rho^{-1}(\sqrt{2 \log(2) (L+Q) }+1)(\prod_{l=1}^Q M_l^{0})\left(\sum _{j=1}^k B_j \prod_{l=1}^L M_l^{j} \right)}{\sqrt{kN}} +3\sqrt{\frac{k\ln(2k/\delta)}{2N}}
\end{align*}
\end{proof}

By taking union bound over  $M_l^j$, it is straightforward to replace $\prod_{l=1}^L M_l^{j}$ by $\prod_{l=1}^L \|W_{l}^{j}\|_F$ in the bound of Theorem \ref{thm:5} with additional factors   that is not dominant and  hidden in the $\tilde O$ notation (see the proof of Lemma A.9 of \cite{bartlett2017spectrally} or the proof of Theorem 2 of \cite{koltchinskii2002empirical}).
\newpage

\section{Extended Results}
\label{app:extended_results}
\begin{table*}[!hb]
\centering
\begin{adjustbox}{max width=\textwidth}
\begin{tabular}{ccccccc}
\toprule
 &  &  &  & \textbf{SIL} & \multicolumn{1}{c}{} &  \\
 \midrule
 
\multicolumn{1}{c}{\textbf{Dataset}} & \multicolumn{1}{c}{\textbf{k}} & \multicolumn{1}{c}{\textbf{KMeans}} & \multicolumn{1}{c}{\textbf{DCN}} & \textbf{IDEC} & \textbf{DMNN} & \multicolumn{1}{c}{\textbf{\deepmethod}} \\
\midrule

\multicolumn{1}{c}{\textbf{CIC}} & 2 & \multicolumn{1}{c}{\textbf{0.54}} & NA & \multicolumn{1}{c}{0.51} & 0.125 & 0.358 \\
 & 3 & 0.297 & NA & \textbf{0.354} & 0.092 & 0.305 \\
 & 4 & 0.168 & NA & \textbf{0.326} & 0.085 & 0.231 \\
\midrule

\multicolumn{1}{c}{\textbf{Sepsis}} & 2 & \multicolumn{1}{c}{\textbf{0.588}} & 0.253 & \multicolumn{1}{c}{0.325} & 0.250 & 0.357 \\
 & 3 & 0.277 & 0.242 & \textbf{0.406} & 0.104 & 0.349 \\
 & 4 & 0.212 & NA & \textbf{0.307} & 0.099 & 0.248 \\
\midrule

\multicolumn{1}{c}{\textbf{AKI}} & 2 & 0.401 & 0.596 & \textbf{0.769} & 0.259 & 0.638 \\
 & 3 & 0.459 & 0.782 & \textbf{0.828} & 0.285 & 0.611 \\
 & 4 & 0.459 & 0.585 & \textbf{0.622} & 0.278 & 0.53 \\
\midrule

\multicolumn{1}{c}{\textbf{ARDS}} & 2 & 0.369 & 0.65 & \textbf{0.746} & 0.295 & 0.484 \\
 & 3 & 0.384 & NA & \multicolumn{1}{c}{0.553} & 0.319 & \multicolumn{1}{c}{\textbf{0.628}} \\
 & 4 & 0.4 & NA & \textbf{0.856} & 0.301 & 0.76 \\
\midrule

\multicolumn{1}{c}{\textbf{WID-M}} & 2 & 0.346 & \multicolumn{1}{c}{\textbf{0.797}} & \multicolumn{1}{c}{0.423} & 0.202 & \multicolumn{1}{c}{0.393} \\
 & 3 & 0.189 & NA & \textbf{0.45} & 0.157 & \multicolumn{1}{c}{0.282} \\
 & 4 & 0.136 & NA & \textbf{0.544} & 0.152 & \multicolumn{1}{c}{0.197} \\
\midrule

\multicolumn{1}{c}{\textbf{Diabetes}} & 2 & 0.249 & \multicolumn{1}{c}{\textbf{0.448}} & \multicolumn{1}{c}{0.378} & 0.156 & 0.129 \\
 & 3 & 0.201 & 0.181 & \textbf{0.503} & 0.167 & 0.154 \\
 & 4 & 0.157 & 0.088 & \textbf{0.423} & 0.183 & 0.24 \\
\midrule

\multicolumn{1}{c}{\textbf{CIC\_LOS}} & \multicolumn{1}{c}{2} & 0.547 & \multicolumn{1}{c}{\textbf{0.582}} & \multicolumn{1}{c}{0.546} & \multicolumn{1}{c}{-} & 0.231 \\
 & \multicolumn{1}{c}{3} & 0.306 & 0.372 & \textbf{0.447} & \multicolumn{1}{c}{-} & 0.099 \\
 & \multicolumn{1}{c}{4} & 0.162 & NA & \textbf{0.349} & \multicolumn{1}{c}{-} & 0.129 \\
\bottomrule

\end{tabular}
\end{adjustbox}
\caption{Silhouette Scores. Best values in bold.}
\label{tab:Sil_scores}
\end{table*}

\begin{table*}[!htb]
\centering
\begin{adjustbox}{max width=\textwidth}
\begin{tabular}{ccccccccc}
\toprule
\multicolumn{1}{l}{} & \multicolumn{1}{l}{} &  &  & \multicolumn{1}{l}{\textbf{}} & \textbf{F1} & \multicolumn{1}{l}{} & \multicolumn{1}{l}{} & \multicolumn{1}{l}{} \\
\midrule

\textbf{Dataset} & \textbf{k} & \multicolumn{1}{c}{\textbf{Baseline}} & \multicolumn{1}{c}{\textbf{SAE}} & \textbf{KMeans-Z} & \textbf{DCN-Z} & \textbf{IDEC-Z} & \textbf{DMNN-Z} & \textbf{\deepmethod} \\
\midrule

\textbf{CIC} & 1 & \multicolumn{1}{r}{\textbf{0.436}} & \multicolumn{1}{r}{0.487} & 0.617 & 0.629 & 0.628 & 0.299 & \textbf{0.678} \\
 & 2 &  &  & \textbf{0.639} & \multicolumn{1}{c}{NA} & 0.569 & 0.379 & \textbf{0.712} \\
 & 3 &  &  & \textbf{0.597} & \multicolumn{1}{c}{NA} & 0.606 & 0.413 & \textbf{0.693} \\
 & 4 &  &  & 0.432 & \multicolumn{1}{c}{NA} & 0.582 & 0.404 & \textbf{0.708} \\
\midrule

\textbf{Sepsis} & 1 & \multicolumn{1}{r}{0.500} & \multicolumn{1}{r}{0.498} & \textbf{0.715} & 0.646 & 0.684 & 0.157 & \textbf{0.806} \\
 & 2 &  &  & \textbf{0.561} & 0.525 & 0.686 & 0.427 & \textbf{0.822} \\
 & 3 &  &  & 0.557 & 0.59 & 0.653 & 0.363 & \textbf{0.814} \\
\textbf{} & 4 &  &  & \textbf{0.504} & \multicolumn{1}{c}{NA} & 0.612 & 0.537 & \textbf{0.815} \\
\midrule

\textbf{AKI} & 1 & \multicolumn{1}{r}{0.506} & \multicolumn{1}{r}{0.417} & \textbf{0.57} & 0.528 & 0.578 & 0.382 & \textbf{0.638} \\
 & 2 &  &  & 0.333 & 0.529 & 0.527 & \textbf{0.645} & 0.642 \\
\textbf{} & 3 &  &  & \textbf{0.459} & 0.569 & 0.541 & \textbf{0.647} & 0.625 \\
 & 4 &  &  & 0.361 & 0.571 & \textbf{0.574} & \textbf{0.640} & 0.605 \\
\midrule

\textbf{ARDS} & 1 & \multicolumn{1}{r}{0.481} & \multicolumn{1}{r}{0.481} & \textbf{0.561} & 0.529 & 0.525 & 0.043 & 0.554 \\
\textbf{} & 2 &  & \textbf{} & 0.499 & 0.498 & 0.483 & 0.132 & \textbf{0.567} \\
 & 3 &  &  & \textbf{0.555} & \multicolumn{1}{c}{NA} & 0.492 & 0.143 & \textbf{0.564} \\
 & 4 &  &  & \textbf{0.548} & \multicolumn{1}{c}{NA} & 0.481 & 0.163 & \textbf{0.551} \\
\midrule

\textbf{WID-M} & 1 & \multicolumn{1}{r}{0.520} & \multicolumn{1}{r}{\textbf{0.514}} & 0.644 & 0.581 & 0.65 & 0.026 & \textbf{0.696} \\
 & 2 &  &  & \textbf{0.604} & 0.497 & 0.621 & 0.331 & \textbf{0.694} \\
 & 3 &  &  & \textbf{0.582} & \multicolumn{1}{c}{NA} & 0.616 & 0.331 & \textbf{0.686} \\
 & 4 &  &  & 0.578 & \multicolumn{1}{c}{NA} & 0.606 & 0.360 & \textbf{0.686} \\
\midrule

\textbf{Diabetes} & 1 & \multicolumn{1}{r}{0.471} & \multicolumn{1}{r}{0.438} & \textbf{0.491} & 0.436 & 0.486 & 0.226 & 0.465 \\
 & 2 &  &  & \textbf{0.438} & 0.421 & 0.468 & 0.235 & \textbf{0.493} \\
 & 3 &  &  & \textbf{0.468} & 0.401 & 0.439 & 0.280 & 0.447 \\
 & 4 &  &  & 0.453 & 0.393 & 0.451 & 0.279 & \textbf{0.46} \\
\midrule

\textbf{CIC-LOS} & 1 & \multicolumn{1}{r}{0.399} & \multicolumn{1}{r}{0.427} & 0.433 & 0.403 & 0.431 & \multicolumn{1}{c}{-} & \textbf{0.458} \\
 & 2 &  &  & 0.354 & 0.371 & 0.409 & \multicolumn{1}{c}{-} & \textbf{0.456} \\
 & 3 &  &  & 0.324 & 0.394 & 0.403 & \multicolumn{1}{c}{-} & \textbf{0.46} \\
 & 4 &  &  & 0.292 & \multicolumn{1}{c}{NA} & 0.398 & \multicolumn{1}{c}{-} & \textbf{0.442} \\
\bottomrule
\end{tabular}
\end{adjustbox}
\caption{F1 Scores. Best values in bold.}
\label{tab:F1_scores}
\end{table*}

\newpage
\section{Extended Ablation Analysis}
\label{app:extended_ablation_analysis}

\begin{figure*}[!htb]
     \centering
     \begin{subfigure}[b]{0.33\textwidth}
         \centering
         \includegraphics[width=\textwidth]{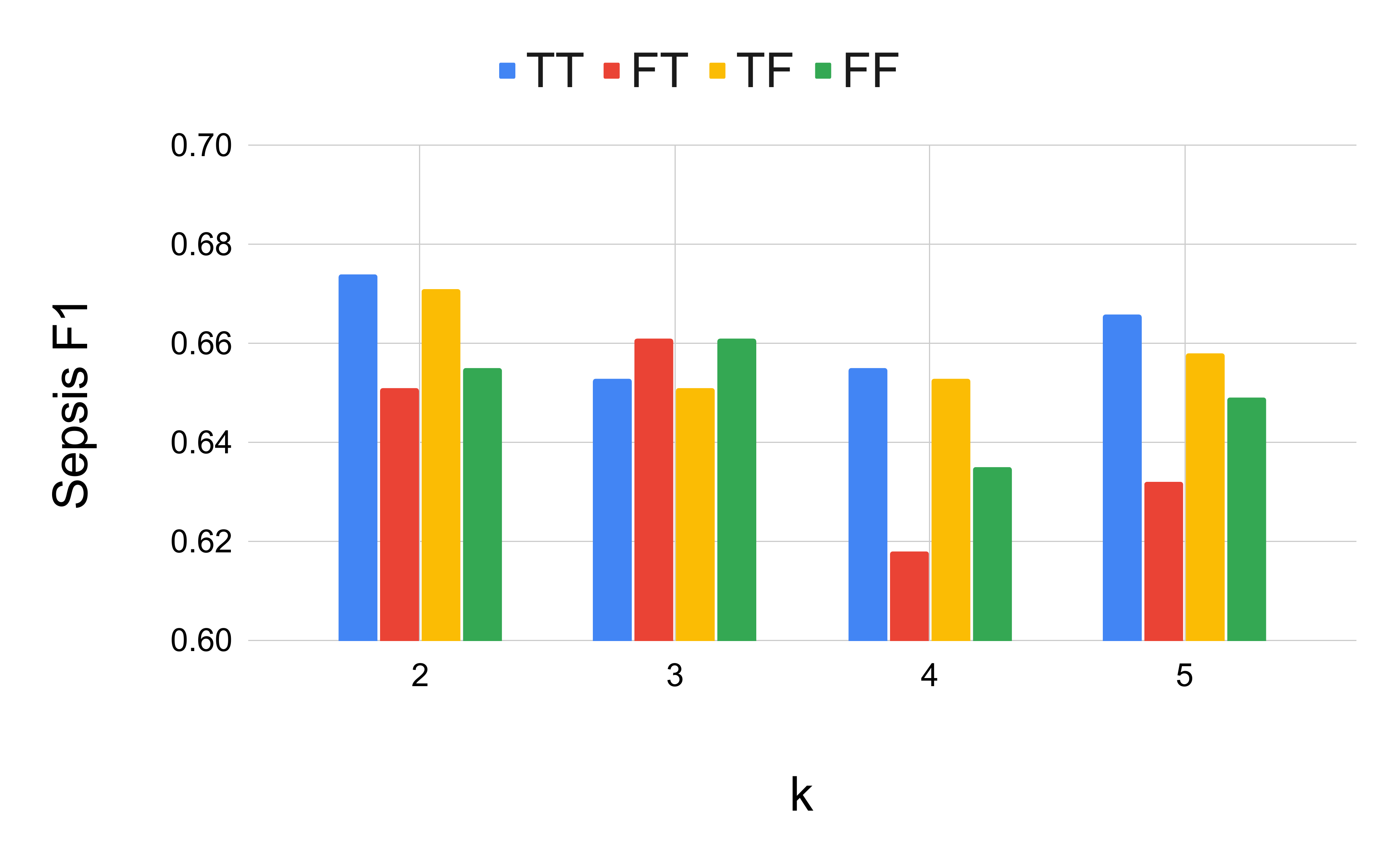}
        \caption{F1/AUC vs. $\beta$}
        \label{fig:app_f1_auc_beta}
    \end{subfigure}
     \hfill
     \begin{subfigure}[b]{0.33\textwidth}
         \centering
         \includegraphics[width=\textwidth]{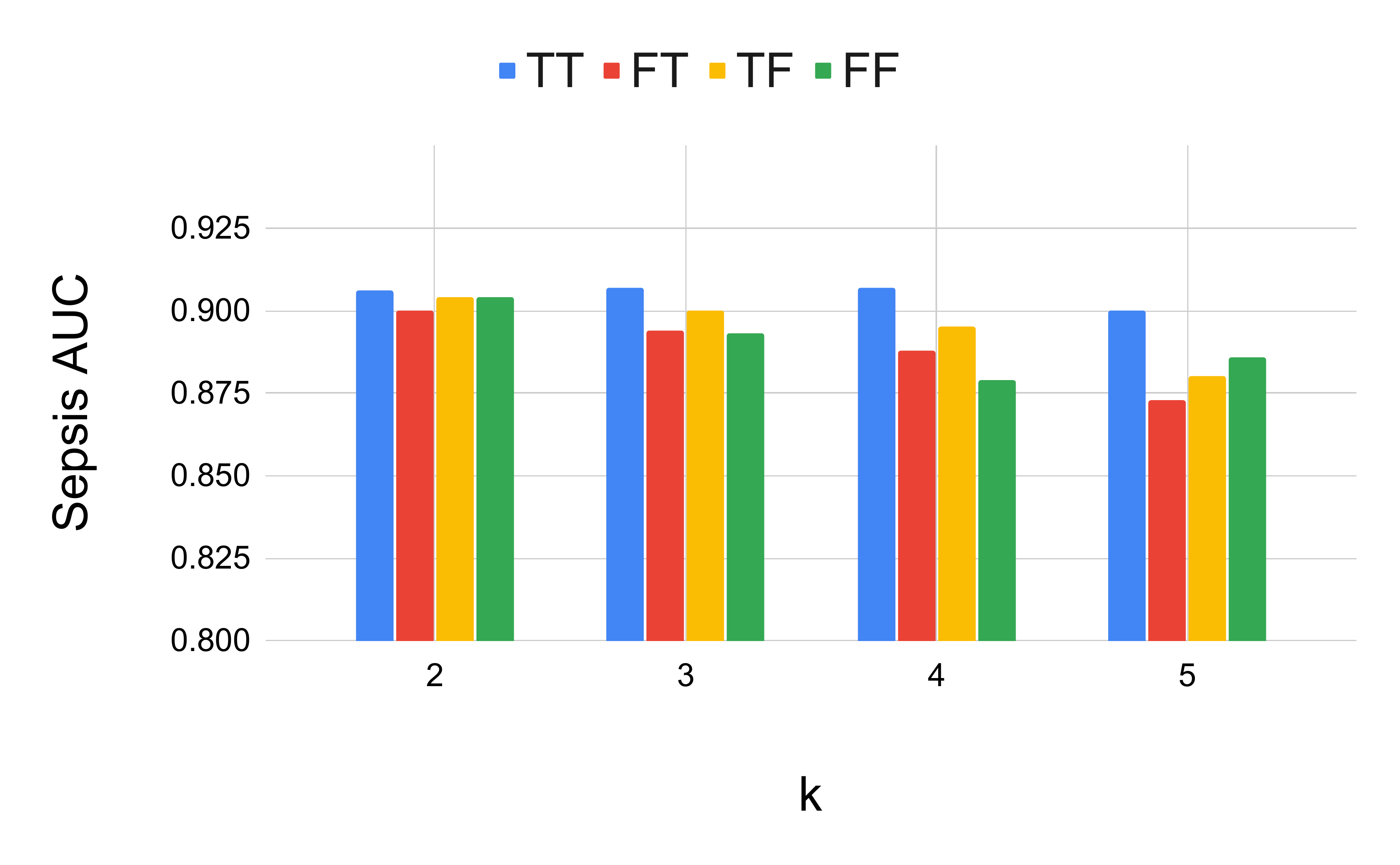}
        \caption{F1/AUC vs. $\gamma$}
        \label{fig:app_f1_auc_gamma}
     \end{subfigure}
     \hfill
     \begin{subfigure}[b]{0.33\textwidth}
         \centering
         \includegraphics[width=\textwidth]{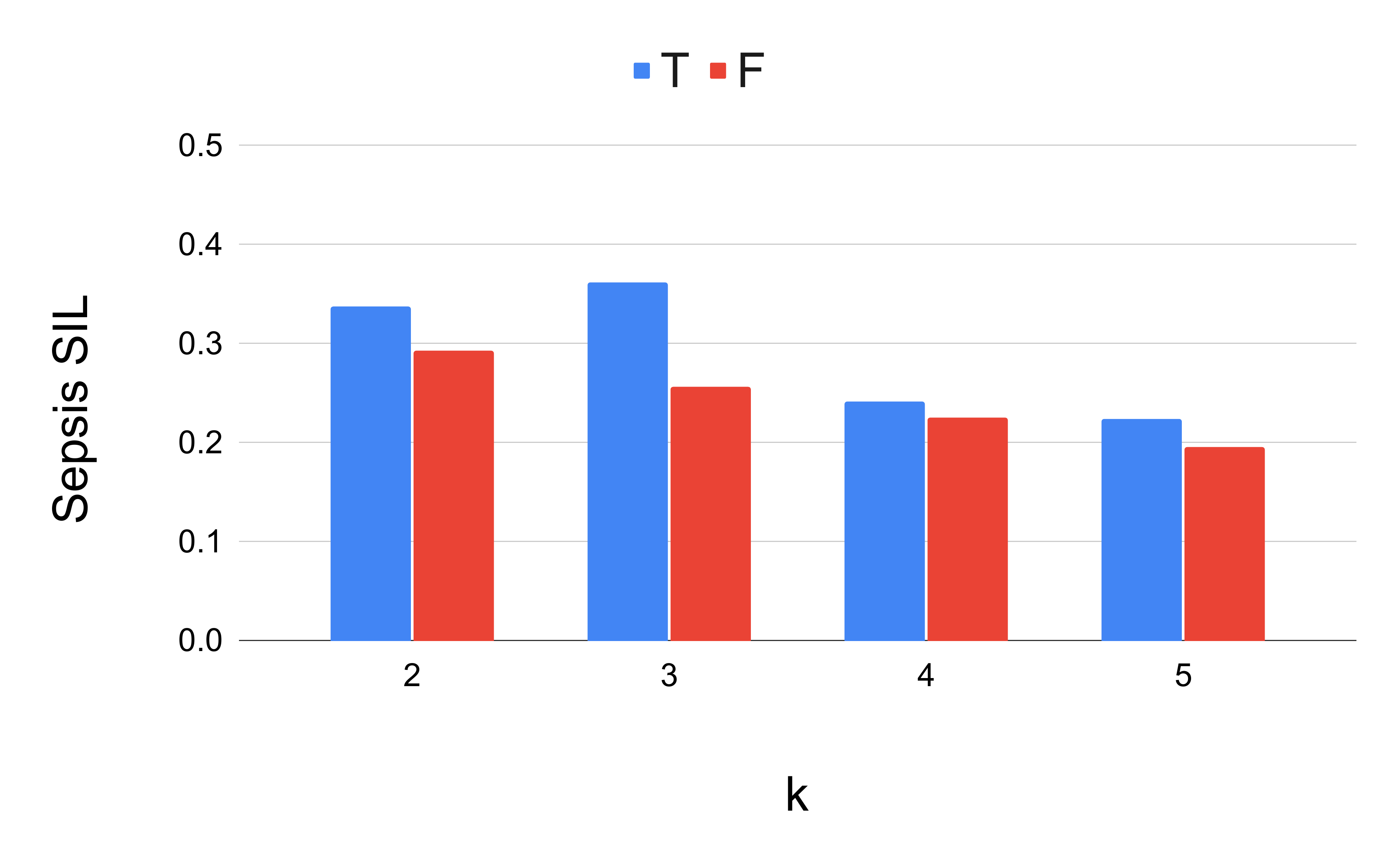}
         \caption{SIL}
         \label{fig:app_sil_cic}
     \end{subfigure}
    \caption{Ablation study on Sepsis dataset. TT represents that stochastic sampling is turned on in both training and testing phase. Similarly for other combinations. Note that stochastic sampling turned on in the training phase has a strong effect on performance.}
    \label{app:fig:sepsis_ablation}
\end{figure*}

\section{Data Preprocessing and Feature Extraction}
\label{app:data_preprocessing}

We evaluate the results on $6$ real world clinical datasets out of which $3$ are derived from the \textsc{MIMIC} III dataset derived from Beth Israel Deaconess Hospital. CIC dataset is derived from 2012 Physionet challenge \cite{silva2012cic} of  predicting in-hospital mortality of intensive care units. The Sepsis dataset is derived from the 2019 Physionet challenge of sepsis onset prediction. We manually extract the Acute Kidney Failure (AKI) and Acute Respiratory Distress Syndrome (ARDS) datasets from the larger MIMIC III dataset. WID Mortality dataset is extracted from the 2020 Women In Data Science challenge to predict patient mortality. The Diabetes Readmission prediction dataset consists of $100000$ records of patients. This is a multiclass version of the CIC dataset where the Length of Stay (LOS) feature is discretized into $3$ classes using quartiles. The task is to predict the LOS of ICU patients. 

\begin{itemize}
    \item \textbf{CIC}: The CIC dataset is derived from the 2012 Physionet challenge \cite{silva2012cic} of  predicting in-hospital mortality of intensive care units (ICU) patients at the end of their hospital stay. The data has time-series records, comprising various physiological parameters of $12000$ patient ICU stays. We follow the data processing scheme of \cite{johnson2012patient} (the top ranked team in the competition) to obtain static 117-dimensional features for each patient. 
    
    \item \textbf{Sepsis}: The Sepsis dataset is derived from the 2019 Physionet challenge of sepsis onset prediction. The dataset has time-series records, comprising various physiological parameters of $\sim40000$ patients. We follow the data processing scheme of the competition winners \cite{morrill2019signature} to obtain static 89 dimensional features for each patient.
    
    \item \textbf{AKI and ARDS}: We manually extract the Acute Kidney Failure (AKI) and Acute Respiratory Distress Syndrome (ARDS) datasets from the larger MIMIC III dataset. We follow the KDIGO criteria
    to determine kidney failure onset time and the Berlin criteria 
    for ARDS. The challenge is to predict kidney failure onset ahead of time. Similar to the above datasets, we derive static vectors from the time series data.
    
    \item \textbf{WID Mortality}: This dataset is extracted from the 2020 Women In Data Science challenge where the objective is to create a model that uses data from the first 24 hours of intensive care to predict patient survival. We perform standard data cleaning procedures before using the dataset.

    \item \textbf{Diabetes}: The Diabetes Readmission prediction dataset consists of $100000$ records of patients from 130 hospitals in the US from the year 1998 to 2008.
    
    \item \textbf{CIC-LOS}: This is a multiclass version of the CIC dataset where the Length of Stay (LOS) feature is discretized into $3$ classes using quartiles. The task is to predict the LOS of ICU patients.
\end{itemize}

\section{\deepmethod\ Optimization Details}
\label{app:optimization}

\subsection{Updating \deepmethod\ parameters}

Optimizing $(\Ucal, \Vcal)$ is similar to training an SAE - but with the additional loss terms $L_c, L_s$ and $L_{bal}$. We code our algorithm in PyTorch which allows us to easily backpropagate all the losses simultaneously. To implement SGD for updating the network parameters, we look at the problem w.r.t. the incoming data $x_i$:

\begin{align*}
\min_{\Ucal, \Wcal} L^{i} = \ell(g(f(x_i)), x_i) + \beta \cdot \operatorname{KL}(P,Q) \\
+ \gamma \cdot \sum_{j=1}^{k} q_{i,j}\operatorname{CE}(y_i, h_j(x_i; \mathcal{V}_j)) + \delta \cdot L_{bal}
\end{align*}

The gradient of the above function over the network parameters is easily computable. Let $\Jcal=(\Ucal, \Vcal, \Wcal)$ be the collection of network parameters, then for a fixed target distribution $P$, the gradients of $L_c$ w.r.t. embedded point $z_i$ and cluster center $\mu_j$ can be computed as:

\begin{align*}
&\frac{\partial L_{c}}{\partial z_{i}}=2 \sum_{j=1}^{k}\left(1+\left\|z_{i}-\mu_{j}\right\|^{2}\right)^{-1}\left(p_{i j}-q_{i j}\right)\left(z_{i}-\mu_{j}\right) \\
&\frac{\partial L_{c}}{\partial \mu_{j}}=2 \sum_{i}\left(1+\left\|z_{i}-\mu_{j}\right\|^{2}\right)^{-1}\left(q_{i j}-p_{i j}\right)\left(z_{i}-\mu_{j}\right)
\end{align*}

The above derivations are from \cite{xie2016unsupervised}. We leverage the power of automatic differentiation to calculate the gradients of $L_{bal}$ and $L_s$ during execution. Then given a mini batch with $m$ samples and learning rate $\lambda, \mu_j$ is updated by:

\begin{align}
    \label{eqn:mu_update}
    \mu_j = \mu_j - \frac{\tau}{m} \sum_{i=1}^{m}\frac{\partial L_c}{\partial \mu_j}
\end{align}

The decoder's weights are updated by:
\begin{align}
    \label{eqn:V_update}
    \Vcal = \Vcal - \frac{\tau}{m} \sum_{i=1}^{m}\frac{\partial L_r}{\partial \Vcal}
\end{align}

and the encoder's weights are updated by:
\begin{align}
    \label{eqn:U_update}
    \Ucal = \Ucal - \frac{\tau}{m} \sum_{i=1}^{m}\left(\frac{\partial L_r}{\partial \Ucal} + \beta\frac{\partial L_c}{\partial \Ucal} + \gamma\frac{\partial L_s}{\partial \Ucal} + \delta \frac{\partial L_{bal}}{\partial \Ucal}\right)
\end{align}

\[
\Jcal \xleftarrow{} \Jcal - \tau \nabla_{\Jcal} L^i
\]

where $\tau$ is the diminishing learning rate.\\

\section{Case Study: Additional Analysis}
\label{app:cic_clusters}

\begin{table}[!h]
\centering
\begin{adjustbox}{max width=0.5\textwidth}
\begin{tabular}{lll}
\toprule
Cluster 1 & Cluster 2 & Cluster 3 \\ \midrule
SAPS-I & Length\_of\_stay & SAPS-I \\
SOFA & CCU & SOFA \\
GCS\_first & CSRU & Length\_of\_stay \\
GCS\_lowest & DiasABP\_first & Weight \\
CSRU & GCS\_first & CSRU \\
Creatinine\_last & Creatinine\_last & MechVentLast8Hour \\
Creatinine\_last & Glucose\_first & DiasABP\_first \\
GCS\_last & HR\_first & Glucose\_first \\
BUN\_last & MAP\_first & HR\_first \\
Creatinine\_first & NIDiasABP\_first & MAP\_first \\
Lactate\_first & NIMAP\_first & NIDiasABP\_first \\
\hline
\end{tabular}
\end{adjustbox}
\caption{Cluster wise features sorted by $\operatorname{HTFD}$ values.}
\label{tab:fd_metrics}
\end{table}

We study the clusters found by \deepmethod\ by analyzing the $\operatorname{HTFD}$ metric in table \ref{tab:fd_metrics}. SAPS-I and SOFA have high $\operatorname{HTFD}$ in clusters $1$ and $3$. This indicates that the two clusters have different distributions of SAPS-I and SOFA. It (SOFA) is also an important feature for cluster $3$'s risk model (Table \ref{tab:important_features}) but in distribution, it is similar to SOFA values of clusters $2$ but not $1$. GCS\_Last (Glasgow Coma Score) is an important feature for all the three risk models but not significantly different across the three clusters (cluster averages $12.744, 14.489, 10.226$). Normal GCS is $15$.

\end{document}